\date{}
  \edef\maintocdepth{\the\value{tocdepth}}%
\DeclareMathOperator*{\argmax}{arg\,max}
\DeclareMathOperator*{\argmin}{arg\,min}
\newcommand{\weight}{w}
\newcommand{\FuncClass}{\mathbb{F}}
\newcommand{\GuncClass}{\mathbb{G}}
\newcommand{\Rade}{\mathcal{R}}
\newcommand{\U}{\mathcal{U}}
\newcommand{\CDF}{\textsc{{CDF}}\xspace}
\newcommand{\Real}{\mathbb{R}}
\newcommand{\Borel}{\mathbb{B}}
\newcommand{\Natural}{\mathbb{N}}
\newcommand{\F}{\mathcal F}
\newcommand{\Z}{\mathcal Z}
\newcommand{\X}{\mathcal X}
\renewcommand{\L}{\mathcal L}
\newcommand{\Y}{\mathcal Y}
\newcommand{\V}{\mathcal V}
\newcommand{\E}{\mathbb E}
\newcommand{\Prob}{\mathbb P}
\newcommand{\Ind}{\mathbbm{1}}
\newcommand{\N}{\mathcal N}
\newcommand{\R}{\mathfrak{R}}
\newcommand{\CVaR}{\textnormal{CVaR}}
\newcommand{\wt}{\widetilde}
\newcommand{\wh}{\widehat}
\newcommand{\wb}{\overline}
\newcommand{\cvara}{\text{CVaR}_\alpha}
\newcommand{\oce}{\text{oce}}
\newcommand{\ioce}{{\overline{{\text{oce}}}}}
\newcommand{\mybrace}[1]{\left\{#1\right\}}
\newcommand{\JuncClass}{\mathbb{J}}
\newcommand{\HuncClass}{\mathbb{H}}
\def\moverlay{\mathpalette\mov@rlay}
\def\mov@rlay#1#2{\leavevmode\vtop{%
   \baselineskip\z@skip \lineskiplimit-\maxdimen
   \ialign{\hfil$\m@th#1##$\hfil\cr#2\crcr}}}
\newcommand{\charfusion}[3][\mathord]{
    #1{\ifx#1\mathop\vphantom{#2}\fi
        \mathpalette\mov@rlay{#2\cr#3}
      }
    \ifx#1\mathop\expandafter\displaylimits\fi}
\newtheorem{lemma}{Lemma}[section]
\newtheorem{remark}{Remark}[section]
\newtheorem{definition}{Definition}[section]
\newcommand{\uniformquantity}{e_n(\FuncClass, \ell)}
\newcommand{\leqi}[1]{\textcolor{magenta}{LL:#1}}
\newif\ifarxiv
\title{Supervised Learning 
with General Risk Functionals}
\author[1]{Liu Leqi}
\author[2]{Audrey Huang}
\author[1]{Zachary C. Lipton}
\author[3]{Kamyar Azizzadenesheli}
\affil[ ]{\texttt{\href{mailto:leqil@cs.cmu.com}{\textcolor{black}{leqil@cs.cmu.edu}},\href{mailto:audreyh5@illinois.edu}{\textcolor{black}{audreyh5@illinois.edu}},\href{mailto:zlipton@cmu.edu}{\textcolor{black}{zlipton@cmu.edu}},\href{mailto:kamyar@purdue.edu}{\textcolor{black}{kamyar@purdue.edu}}}}
\affil[1]{Machine Learning Department, Carnegie Mellon University}
\affil[2]{Department of Computer Science, University of Illinois Urbana-Champaign}
\affil[3]{Department of Computer Science, Purdue University}
\begin{document}

\maketitle

\begin{abstract}
Standard uniform convergence results
bound the generalization gap 
of the \emph{expected} loss  
over a hypothesis class.
The emergence of risk-sensitive learning 
requires generalization guarantees for functionals of the loss distribution
beyond the expectation.
While prior works specialize in uniform convergence of particular functionals,
our work provides uniform convergence 
for a general class of H\"older risk functionals 
for which the  closeness in the Cumulative Distribution Function (CDF) 
entails closeness in risk.
We establish the first
uniform convergence results 
for estimating the CDF of the loss distribution,
yielding guarantees that hold simultaneously 
both over all H\"older risk functionals
and over all hypotheses. 
Thus licensed to perform empirical risk minimization,
we develop practical gradient-based methods 
for minimizing distortion risks
(widely studied subset of H\"older risks
that subsumes the spectral risks, 
including the mean, conditional value at risk, 
cumulative prospect theory risks, and others)
and provide convergence guarantees.
In experiments, we demonstrate 
the efficacy of our learning procedure, 
both in settings where uniform convergence results hold and in high-dimensional settings with deep networks.

\end{abstract}

\section{Introduction}

To date, the vast majority of supervised, unsupervised, 
and reinforcement learning research has focused on objectives 
expressible as expectations (over some dataset or distribution)
of an underlying loss (or reward) function. 
This focus is understandable.
The expected loss is mathematically convenient
and a reasonable default, 
and a special case of nearly every 
proposed family of risks. 
To be sure, this focus has paid off:
we now possess a rich body of theory and methods
for evaluating, optimizing, and providing
theoretical guarantees on 
{the expected loss.}

However, real-world concerns such as risk aversion, 
equitable allocations of benefits and harms, 
or alignment with human preferences, 
often demand that we address
other functionals of the loss distribution.
For example, in finance, the expectation of returns
must be weighed against their variance
to determine an ideal portfolio allocation,
as codified, e.g., in the mean-variance objective \citep{bjork2014mean}. %
Focusing on supervised learning, 
consider the common scenario
in which a population contains a minority 
(constituting fraction $\alpha$ of the population)
but where group membership 
was not recorded in the available data.
If the pattern relating the features
to the label were different for different demographics,
a naively trained model might adversely 
harm members of a minority group.
Absent further information, 
one sensible strategy could be
to optimize the worst case performance 
over all subsets (of size up to $\alpha$).
This would translate to the familiar 
Conditional Value at Risk (CVaR) objective
\citep{rockafellar2000optimization}.{ In addition, even in settings 
where a model is evaluated in terms of 
the expected loss at test time, 
the training objective may be chosen 
as some other functional
to account for phenomena such as 
distribution shifts~\citep{duchi2018learning}, noisy labels~\citep{lee2020learning}, or imbalanced datasets~\citep{li2020tilted}.
}

Risk-sensitive learning research 
{addresses the problem of learning models 
under many families of (risk) functionals,
}
including (among others) 
distortion risks \citep{wirch2001distortion},
coherent risks \citep{artzner1999coherent},
spectral risks \citep{acerbi2002spectral},
and cumulative prospect theory risks \citep{prashanth2016cumulative}.
Subsuming these risks under a common framework 
addressing bounded losses/rewards,
\citet{huang2021off} recently introduced 
Lipschitz risk functionals,
for which differences in the risk are bounded
by (sup norm) differences in the Cumulative Distribution Function (CDF) of losses.
Thus, because a single CDF estimate 
can be used to estimate all Lipschitz risks,
sup norm concentration of the CDF estimate 
entails corresponding (simultaneous) concentration
of all Lipschitz risks calculated on that CDF estimate. 
However, this concentration result applies 
only to a single hypothesis. 
In contrast, most uniform convergence results 
in learning theory 
have concentrated largely on {the} expected loss \citep{vapnik1999overview,vapnik2013nature,bartlett2002rademacher}. 
While uniform convergence results are known
for several specific risk classes, 
including the spectral/rank-weighted risks~\citep{khim2020uniform} 
and optimized certainty equivalent risks~\citep{lee2020learning}, 
no results to date 
provide uniform convergence guarantees 
that hold simultaneously over 
{both a hypothesis class and a broad class of risks.}

Tackling this problem, 
{we present, to our knowledge, the first uniform convergence guarantee on estimation of the loss CDF.
}
Our bounds rely on appropriate complexity measures of the hypothesis class. 
In addition to relying on the familiar Rademacher complexity and VC dimension,
we propose a new notion of permutation complexity
that is especially suited to \CDF{} estimation. 

For general risk estimation, we adopt the broader class (subsuming the Lipschitz risks) of H\"older risk functionals, for which closeness in distribution entails closeness
in risk. 
Combined with our uniform convergence guarantees for CDF estimation, this property allows us to
establish uniform convergence guarantees for risk estimation of supervised learning models, which hold simultaneously over
all hypotheses in the model class
and over all H\"older risks.

These results license us to optimize general risks,
assuring that for appropriate model classes
and given sufficient data,
the empirical risk minimizer 
will indeed generalize and that 
whichever objective is optimized,
{all} H\"older risk estimates %
will be close to their true values. %
{G}eneralization aside,
optimizing complex risks is non-trivial.
To tackle this problem, we propose a new algorithm 
for optimizing distortion risks, 
{a subset of H\"older risks that subsumes the spectral risks, 
including the expectation, CVaR, 
cumulative prospect theory risks, and others}. 
Our approach extends traditional gradient-based 
empirical risk minimization methods
to handle distortion risks
\citep{denneberg1990distorted,wang1996premium}.
In particular, we calculate
the empirical distortion risk 
by re-weighting losses 
based on \CDF values
and establish convergence guarantees
for the proposed optimization method.
Finally, we experimentally validate our algorithm, 
both in settings where uniform convergence results hold 
and in high-dimensional settings with deep networks.

In summary, we contribute the following:
\begin{enumerate}
     \item 
    The first uniform convergence result for CDF estimation 
    together with corresponding new complexity measures suited to the task
    (Section~\ref{Sec:SV})
    and applications of these results to produce guarantees on risk estimation 
    that hold simultaneously over all H\"older risks
    (Section~\ref{sec:application-risk-assessement}).
    \item 
    A gradient-based method for minimizing distortion risks (widely studied subset of H\"older risks)
    that re-weights examples dynamically based on the empirical \CDF 
    of losses, and corresponding convergence guarantees
    (Section~\ref{sec:optimization}). 
    \item 
    Experiments confirming the practical usefulness of our learning algorithm (Section~\ref{Sec:Exp}).
\end{enumerate}

\allowdisplaybreaks

\ifarxiv
We extend our study of $\CDF$ estimation and risk assessment to the problem of label shift~\citep{lipton2018detecting,azizzadenesheli2019regularized}. Imagine, we employ a group of specialist in a city (source domain) to diagnose the disease of patients in a city. Through out this process, we construct a labeled  data set data patients, i.e., their disease and diagnosed symptoms. At this point, we can deploy the above mentioned derivation to assess the risks of our sets of prediction model, i.e., mapping from patient's symptoms+feature to disease. Now consider, we aim to carry our study to a new city (target domain). For this new city, we have access to the patients symptoms data through the hospitals data sets. We can employ the same group of specialist to annotate this data with the diagnosed disease. However, this process is costly, time consuming, and does not carry the problem structure. 
\begin{center}
\textit{Can we deploy the labeled data from the source domain (e.g., symptom+feature and disease) and unlabeled data from target domain (just symptom+feature) to assess risks in the target domain?}
\end{center}
In this paper, we choose to study label shift setting, because of four main initial reasons,  it is an important setting of study in practice, it carries the core theoretical challenges of domain adaptation, e.g., importance weighted risk assessment under noisy estimation of importance wight, there are efficient methods to estimate the importance wight~\citep{azizzadenesheli2020importance}, the study of risk assessments in noisy importance wight estimates is serves as a precursor to off policy risk assessment and improvement in reinforcement learning (RL)~\citep{huang2021off}.

We show that, given source labeled samples and target unlabeled samples, we still can estimate the \CDF of loss for all the functions in the function class and assess all the H\"older risk functional where the bound simultaneity holds.  
\fi

\section{Related Literature}

Risk functionals have long been studied 
in diverse contexts
\citep{sharpe1966mutual, artzner1999coherent,rockafellar2000optimization,krokhmal2007higher,shapiro2014lectures,acerbi2002spectral,prashanth2016cumulative,jie2018stochastic}. 
CVaR, value-at-risk, and mean-variance
\citep{casselgeneral,sani2013risk,vakili2015mean,zimin2014generalized}
rank among the most widely studied risks.
\citet{prashanth2016cumulative} introduces
the cumulative prospect theory risks,
which have been studied in bandit 
\citep{gopalan2017weighted}
and supervised learning settings~\citep{liu2019human}.
Many previous works have tackled the 
evaluation \citep{huang2021off, chandak2021highconfidence}
and optimization
\citep{torossian2019mathcal,munos2014bandits} 
of risk functionals.

Recent work on risk-sensitive supervised learning 
has established the uniform convergence
of a single risk functional 
when losses incurred by the models are bounded~\citep{khim2020uniform,lee2020learning}, 
or the excess risk of a particular learning procedure 
in cases where the loss
could be unbounded~\citep{holland2021spectral}. 
Collectively, these works have addressed 
the class of spectral risks 
(L-risks or rank-weighted risks) 
that includes the expected value, CVaR and cumulative prospect theory risks~\citep{khim2020uniform,holland2021spectral},
as well as the class of optimized certainty equivalent risks 
that includes the expected value, CVaR and entropic risks \citep{ben1986expected,lee2020learning} .

To our knowledge, the aforementioned risk-sensitive {learning} results are considerably narrower:
the analyses apply only to smaller \emph{families} of risks
and the guarantees hold only for 
a \emph{single risk functional} 
(not simultaneously over the family).
By contrast, we establish uniform convergence results 
that hold simultaneously over both a broader class of risks
and over an entire model class 
(constrained by an appropriate complexity measure).
The key to our approach is to %
{estimate the \CDF of losses and control its sup norm error uniformly over a hypothesis class. }
{CDF estimation} 
is a central topic in learning theory~\citep{devroye2013probabilistic}. 
Strong approximation %
results provide concentration bounds 
on the Kolmogorov–Smirnov distance (sup norm)
between the true and estimated \CDF~\citep{massart1990tight}.
{As our uniform convergence results are over a hypothesis class of possibly infinite number of hypotheses, we control the complexity of the hypothesis class using data-dependent complexity notions (e.g., Rademacher complexity)
and data-independent complexity notions 
(e.g., VC dimension) \citep{alexander1984probability,vapnik2006estimation,ganssler1979empirical}.}

\section{Preliminaries}

We use $\X$ to denote the space of covariates,
$\Y$ the space of labels,
and $\Z = \X \times \Y$. 
Let $\ell:\Y \times \Y \rightarrow \Real$ denote a loss function 
and  $\FuncClass$ a hypothesis class,
where for any $f \in \FuncClass$, $f:\X \rightarrow \Y$.  
The set $\FuncClass_\ell$, with elements $\ell_f$, 
denotes the class of functions 
that are compositions of the loss function $\ell$ 
and a hypothesis $f\in\FuncClass$, 
i.e., $\forall z \in \Z$, $\ell_f(z) = \ell\left(f(x), y\right)$.
Furthermore, we use $\ell_f\left(Z\right)$ 
to denote the random variable of the loss incurred 
by $f \in \FuncClass$ under data $Z = (X,Y)$. 
For any $n\in\Natural$, $[n]:=\lbrace 1,\ldots,n\rbrace$. 

We use $\U$ to denote the space of real-valued 
random variables that admit CDFs. 
For any $U \in \U$, 
its \CDF is denoted by $F_U$. 
A risk functional $\rho: \U \rightarrow \Real$ 
is a mapping from a space of real-valued random variables to reals.
A risk functional is called {law-invariant} 
(or version-independent)
if for any pair of random variables 
$U,U' \in \U$ with the same law ($F_U = F_{U'}$), we have 
$\rho(U)=\rho(U')$~\citep{kusuoka2001law}. 
We work with law-invariant risk functionals 
in this paper, and with some abuse of notation,
we refer to $\rho(F_U)$ and $\rho(U)$ interchangeably.

\section{Uniform Convergence for CDF Estimation 
}\label{Sec:SV}%

We begin with an important building block 
for risk estimation---{\CDF estimation 
with uniform convergence guarantees}. 
Given a loss function $\ell$
and a data set of $n$ labeled data points
$\lbrace Z_i \rbrace_{i=1}^n$ 
where $Z_i =(X_i,Y_i)$,
we are interested in estimating the \CDF of $\ell_f(Z)$ for all $f \in \FuncClass$. 
We use the unbiased empirical \CDF{} estimator: 
\begin{equation}\label{eq:estimator}
    \wh{F}(r;f) := \frac{1}{n}\sum_{i=1}^n \mathbbm{1}_{\{
    \ell_f(Z_i)
    \leq r\}}, 
\end{equation}
where $\E[\wh F(r ; f)] = \Prob(\ell_f(Z) \leq r) = F(r; f)$. 
To establish the uniform convergence of the estimator,
our central goal is to analyze the following quantity:
\begin{align}
    \uniformquantity = \sup_{f\in\FuncClass}\sup_{r\in\Real} \left|\wh{ F}(r;f) -  F(r;f)\right|.
\end{align} 
In Section~\ref{sec:decouple},
we exploit the special structure of CDF estimation 
and propose a new notion of permutation 
complexity 
that captures the complexity 
of the hypothesis class used for \CDF{} estimation. 
In Section~\ref{sec:classical},
we apply the more classical approach 
for analyzing uniform convergence 
that does not exploit any special structure of \CDF estimation. 
Each approach offers a unique perspective
and contributes to our understanding of \CDF estimation. 
We highlight that the uniform convergence we provide
hold for \emph{any} loss distribution
regardless of whether the loss is binary or
bounded. %

{We first introduce notation key to our analysis.}
The Rademacher complexity in our setting 
(for a given loss function $\ell$) is given as follows:
\begin{align} \label{eq:rade-defn}
    \Rade(n,\FuncClass) &= \E_{\Prob,\R}\left[\sup_{f\in\FuncClass}\sup_{r\in\Real} \frac{1}{n}\left|\sum_{i=1}^n\xi_i \Ind_{\mybrace{\ell_f(Z_i)) \leq r}}\right|\right] \nonumber\\
    &= \E_{\Prob,\R}\left[\sup_{f\in\FuncClass}\sup_{g\in\GuncClass(1)} \frac{1}{n}\left|\sum_{i=1}^n\xi_i g(\ell_f(Z_i))\right|\right],
    \quad \raisetag{20pt}
\end{align}
with $\R$ being a Rademacher measure 
on a set of Rademacher random variables $\{\xi_i\}_{i=1}^n$ 
and 
$
    \GuncClass(1) :=\lbrace  \Ind_{\lbrace \cdot \; \leq r \rbrace}: \forall r \in\Real\rbrace
$
is the set of indicator functions parameterized by a real-valued $r$.
Using McDiarmid’s inequality and symmetrization, 
we obtain the following classical result 
that bounds $\uniformquantity$ 
in terms of the Rademacher complexity.
All proofs in this section can be found
in Appendix~\ref{appendix:generalization_proofs}. 

\begin{restatable}{theorem}{thmSvGen}\label{thm:SV_Gen}
    Given a hypothesis class $\FuncClass$,
    any loss function $\ell: \Y \times \Y \to \Real$, 
    and $n$ samples $\{Z_i\}_{i=1}^n$, %
     we have that  with probability at least $1-\delta$, %
    \begin{align*}%
        \uniformquantity 
        \leq 2\Rade(n,\FuncClass) + \sqrt{\frac{\log(\frac{1}{\delta})}{2n}}.
    \end{align*}
\end{restatable}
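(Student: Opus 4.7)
The plan is to follow the classical uniform convergence template: a bounded differences argument followed by symmetrization. Since the theorem statement explicitly advertises this as "using McDiarmid's inequality and symmetrization," there should be no surprises, but a few small checks need care because we have a double supremum (over $f \in \FuncClass$ and $r \in \Real$), and the base functions in the Rademacher complexity are indicators applied to $\ell_f(Z_i)$.

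\textbf{Step 1 (Bounded differences).} I would define $\phi(z_1,\dots,z_n) = \sup_{f,r}|\wh{F}(r;f) - F(r;f)|$ viewed as a function of the sample, and verify that swapping a single $z_i$ for $z_i'$ shifts each empirical CDF value $\wh{F}(r;f)$ by at most $1/n$ (because the indicator $\Ind_{\{\ell_f(z_i)\le r\}} \in \{0,1\}$), while $F(r;f)$ is unaffected. Taking the sup over $f,r$ therefore changes $\phi$ by at most $1/n$. McDiarmid's inequality then yields, with probability at least $1-\delta$,
\begin{equation*}
\uniformquantity \;\le\; \E[\uniformquantity] + \sqrt{\tfrac{\log(1/\delta)}{2n}}.
\end{equation*}

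\textbf{Step 2 (Symmetrization).} It remains to bound $\E[\uniformquantity] \le 2\Rade(n,\FuncClass)$. I would introduce an independent ghost sample $\{Z_i'\}_{i=1}^n$ with the same law, so $F(r;f) = \E_{Z'}[\tfrac{1}{n}\sum_i \Ind_{\{\ell_f(Z_i') \le r\}}]$. Pulling this expectation outside the supremum using Jensen's inequality gives
\begin{equation*}
\E[\uniformquantity] \;\le\; \E\!\left[\sup_{f,r}\left|\tfrac{1}{n}\sum_{i=1}^n\bigl(\Ind_{\{\ell_f(Z_i)\le r\}} - \Ind_{\{\ell_f(Z_i')\le r\}}\bigr)\right|\right].
\end{equation*}
Because each summand is symmetric under swapping $Z_i \leftrightarrow Z_i'$, multiplying it by an independent Rademacher sign $\xi_i$ does not change the joint distribution, and the expression is bounded via the triangle inequality by $2\,\Rade(n,\FuncClass)$ as defined in~\eqref{eq:rade-defn}.

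\textbf{Expected obstacle.} The proof is essentially mechanical; the only place that warrants attention is handling the additional supremum over $r \in \Real$. In the classical Rademacher bound for expected loss one has a single sup over the hypothesis class, but here the relevant function class is the two-parameter family $\{z \mapsto \Ind_{\{\ell_f(z) \le r\}} : f \in \FuncClass,\, r \in \Real\}$. This does not break anything, since the symmetrization step and the McDiarmid bounded-differences check go through with both suprema in place, and the Rademacher complexity in \eqref{eq:rade-defn} is already defined to include the sup over $r$. No special care is needed for boundedness of the loss either, since the indicators are always in $\{0,1\}$; this matches the paper's emphasis that the result holds for \emph{any} loss distribution.
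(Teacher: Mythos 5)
Your proposal is correct and follows essentially the same route as the paper's proof: a bounded-differences check giving the $1/n$ sensitivity, McDiarmid's inequality, and then ghost-sample symmetrization with Jensen's inequality and Rademacher signs to bound the expectation by $2\Rade(n,\FuncClass)$. Your handling of the double supremum over $f$ and $r$ matches the paper's treatment, since the Rademacher complexity in \eqref{eq:rade-defn} already incorporates the supremum over $\GuncClass(1)$.
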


In general, the Rademacher complexity 
$\Rade(n, \FuncClass)$ is hard to obtain.
Researchers have come up with different ways 
to control it for various hypothesis classes, 
e.g., hypothesis classes with 
finite VC dimension~\citep{wainwright2019high}. 
In the following, we discuss how we work with $\Rade(n, \FuncClass)$.

\subsection{Permutation Complexity}%
\label{sec:decouple}

We first notice that $\Rade(n, \FuncClass)$ 
depends \emph{jointly} on both 
the hypothesis class $\FuncClass$ and $\GuncClass(1)$.
A direct approach that  
follows from the classical statistical learning theory   
is to work with the function class that combines 
$\FuncClass$ and $\GuncClass(1)$, 
which we provide more details in Section~\ref{sec:classical}.
In this section, 
we propose a new way of thinking about $\Rade(n, \FuncClass)$.
By exploiting the special structure of \CDF estimation
(the structure of $\GuncClass(1)$), 
we uncover that $\Rade(n, \FuncClass)$ 
can be controlled by \emph{only} the complexity 
of the hypothesis class $\FuncClass$ 
(or $\FuncClass_\ell$ with elements $\ell_f$).
In order to do so, we first introduce
the notion of \emph{permutation complexity}.  
This complexity measure is data-dependent 
and enables us to work with $\Rade(n, \FuncClass)$
by disentangling the complexity of $\FuncClass$ (or $\FuncClass_\ell$)
from that of $\GuncClass(1)$.

For a measurable space $\V$, let $\zeta:\V\rightarrow\Real$ 
denote a measurable function and $\lbrace v_i\rbrace_{i=1}^n$ 
denote a set of $n$ points in $\V$. 
Satisfying the conditions of selection 
and maximum theorems~\citep[Chapter 17]{guide2006infinite}, 
a permutation function $\pi: [n]\rightarrow [n]$ 
in the space $\Pi(n)$ of all permutation of size $n$ exists, 
and permutes the indices of $\lbrace v_i\rbrace_{i=1}^n$ 
such that $\zeta(v_{\pi(1)})\leq \zeta(v_{\pi(2)})\leq \ldots\leq \zeta(v_{\pi(n)})$. 
We note that the permutation function 
can depend on the specific data points 
$\{v_i\}_{i=1}^n$ and the function $\zeta$ of interests. 
In the following definitions,
we consider a function class $\mathbb{J}$ 
of functions $\zeta:\V\rightarrow\Real$ 
and a probability measure $\mu$ on $(\V,\sigma(\V))$,
where $\sigma(\V)$ denotes the $\sigma$-algebra generated by $\V$.

\begin{definition}[Permutation Complexity]
The instance-dependent permutation complexity of $\mathbb{J}$ 
at $n$ data points $\lbrace v_i\rbrace_{i=1}^n$,
denoted as $\N_\Pi(\mathbb{J},\lbrace v_i\rbrace_{i=1}^n)$, 
is the minimum number of permutation functions $\pi\in\Pi(n)$
needed to sort elements of $\{\zeta(v_i)\}_{i=1}^n$,
$\forall \zeta\in\mathbb{J}$. 
The permutation complexity of $\mathbb{J}$ at $n$ (random) data points $\lbrace V_i\rbrace_{i=1}^n$ with measure $\mu$ is 
\begin{align*}
    \N_\Pi(n,\mathbb{J},\mu):=\E_\mu\left[\N_\Pi(\mathbb{J},\lbrace V_i\rbrace_{i=1}^n)\right].
\end{align*}
\end{definition}
To obtain a better understanding of permutation complexity, 
we provide the permutation complexity of monotone real-valued functions.

\begin{lemma}\label{lemma:PC_monotone}
When $\V$ is the space of reals, 
$\mathbb{J}$ is a set of real-valued 
non-decreasing functions on $\V$, 
the  permutation complexity $\N_\Pi(n,\mathbb{J},\mu) = 1$.
\end{lemma}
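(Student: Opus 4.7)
The plan is essentially to exhibit a single permutation that simultaneously sorts $\{\zeta(v_i)\}_{i=1}^n$ for every $\zeta\in\mathbb{J}$, which directly witnesses that the instance-dependent permutation complexity is at most $1$ (and it is at least $1$ trivially, since one always needs at least one sorting permutation).

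Concretely, first I would fix an arbitrary realization of data points $\{v_i\}_{i=1}^n\subset\V=\mathbb{R}$ and choose a permutation $\pi\in\Pi(n)$ that sorts the underlying points on the real line, i.e., $v_{\pi(1)}\leq v_{\pi(2)}\leq\cdots\leq v_{\pi(n)}$. Existence of such $\pi$ is immediate (and is exactly the instance of the selection/maximum-theorem guarantee already invoked in the definition, applied to the identity function). Next I would invoke the defining property of $\mathbb{J}$: every $\zeta\in\mathbb{J}$ is non-decreasing, so for each $i<j$, $v_{\pi(i)}\leq v_{\pi(j)}$ implies $\zeta(v_{\pi(i)})\leq \zeta(v_{\pi(j)})$. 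Hence the same $\pi$ sorts $\{\zeta(v_i)\}_{i=1}^n$ for all $\zeta\in\mathbb{J}$ simultaneously, giving $\N_\Pi(\mathbb{J},\{v_i\}_{i=1}^n)=1$.

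Finally, since this bound holds for every realization of $\{v_i\}_{i=1}^n$, taking the expectation under $\mu$ yields
\begin{equation*}
\N_\Pi(n,\mathbb{J},\mu)=\E_\mu[\N_\Pi(\mathbb{J},\{V_i\}_{i=1}^n)]=1.
\end{equation*}
There is no real obstacle here: the whole argument is an application of the order-preserving property of non-decreasing maps on $\mathbb{R}$. The only thing worth being a little careful about is measurability of the selected permutation $\pi=\pi(\{v_i\})$ so that the expectation is well-defined, but this is handled by the selection/maximum theorems cited in the definition of permutation complexity.
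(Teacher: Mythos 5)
Your proposal is correct and follows essentially the same argument as the paper: sort the real points once, use monotonicity of each $\zeta\in\mathbb{J}$ so that the same permutation sorts $\{\zeta(v_i)\}_{i=1}^n$ for every $\zeta$, and conclude the expectation equals $1$. The brief remark on measurability is a fine addition but not part of the paper's proof.
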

\begin{proof}%
    For a set of real-valued points $\lbrace v_i\rbrace_{i=1}^n$, we can construct a permutation function $\pi$ such that $v_{\pi(1)}\leq v_{\pi(2)}\leq \ldots\leq v_{\pi(n)}$. Since all functions in $\JuncClass$ are non-decreasing, for any  $\zeta\in\JuncClass$, we have
    $\lbrace v_i\rbrace_{i=1}^n$ such that $\zeta(v_{\pi(1)})\leq \zeta(v_{\pi(2)})\leq \ldots\leq \zeta(v_{\pi(n)})$. Thus, $\N_\Pi(\mathbb{J},\lbrace v_i\rbrace_{i=1}^n)=1$
    and $\N_\Pi(n,\mathbb{J},\mu) = 1$ for any measure $\mu$. %
\end{proof} 
\begin{remark}
This result implies that the permutation complexity of threshold functions $\GuncClass(1)$ is one. 
\end{remark}

Mapping the definition to our setting, the function class $\JuncClass$ of interest  
is $\FuncClass_\ell$ with elements $\ell_f$.  
The data points $V_i  = Z_i = (X_i, Y_i)$
and the measure $\mu=\Prob$ (the probability measure for $Z$).

An immediate observation is that when $|\FuncClass|$ is finite, 
we only need at most $|\FuncClass|$ permutation functions to sort 
$\{\ell_f(z_i)\}_{i=1}^n$ (one permutation function for each $f \in \FuncClass$), i.e., 
$\N_\Pi(\FuncClass_\ell, \{z_i\}_{i=1}^n) \leq |\FuncClass|$. 
For the special case of binary classification, 
where $\mathcal{Y} = \{0,1\}$
and the loss function $\ell$ is the $0/1$ loss, 
a coarse upper bound on the permutation complexity 
when $\FuncClass$ has finite VC dimension $\nu(\FuncClass)$
is $ \N_\Pi(n,\FuncClass_\ell, \Prob) \leq (n+1)^{\nu(\FuncClass)}$.
This is due to Sauer's Lemma: there are at most  
$(n+1)^{\nu(\FuncClass)}$ ways of labeling the data,
which suggests that we need at most $(n+1)^{\nu(\FuncClass)}$ permutation functions. %
However, as one may have noticed, 
the number of permutation functions needed %
may be (much) smaller than this number. 
For example, consider a binary classification setting 
where we have $3$ data points and $\FuncClass$
is large enough such that all $2^3$ possible losses 
($000, 001, \ldots$) can be incurred. 
In such a case, 
we only need $4$ permutation functions, %
since loss sequences that are non-decreasing (or non-increasing) 
can share the same permutation function, e.g., 
for loss sequences $111, 011, 001, 000$, 
we can use the same permutation function 
$\pi(i) = i, \forall i \in [3]$. 
We note that the permutation complexity 
is defined for not just binary-valued function class{es}. 
Precisely characterizing the permutation complexity 
for different combinations of function classes, 
data distributions and loss functions is of future interest.

\begin{restatable}{theorem}{thmPermutationComplexity}\label{thm:general_class_Rade_permutation_complexity}
    For any hypothesis class $\FuncClass$ and loss function $\ell: \Y \times \Y \to \Real$,  we have that  
    \begin{align*}%
        \Rade(n,\FuncClass)&\leq \sqrt{\frac{\log( 4\N_\Pi(n,\FuncClass_\ell,\Prob))}{2n}}.
    \end{align*}
\end{restatable}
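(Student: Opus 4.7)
The plan is to condition on the realization of the sample $\{Z_i\}_{i=1}^n$ and exploit the combinatorial structure captured by the permutation complexity. By definition, there exist $m := \N_\Pi(\FuncClass_\ell,\{Z_i\}_{i=1}^n)$ permutations $\pi_1,\ldots,\pi_m\in\Pi(n)$ such that, for every $f\in\FuncClass$, at least one $\pi_j$ sorts $\{\ell_f(Z_i)\}_{i=1}^n$ in non-decreasing order. Choosing for each $f$ such an index $j(f)\in[m]$ and re-indexing $i\mapsto \pi_{j(f)}(i)$, the indicator vector $(\Ind_{\{\ell_f(Z_i)\leq r\}})_{i=1}^n$ takes the monotone form $(1,\ldots,1,0,\ldots,0)$ with some $k(f,r)\in\{0,1,\ldots,n\}$ leading ones. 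Setting $S_k^{(j)} := \sum_{i=1}^k \xi_{\pi_j(i)}$, this yields the pointwise bound $\sup_{f,r}\bigl|\sum_{i=1}^n \xi_i \Ind_{\{\ell_f(Z_i)\leq r\}}\bigr|\leq \max_{j\in[m]}\max_{0\leq k\leq n}|S_k^{(j)}|$.

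The next step is to bound the expected maximum on the right. Because $\{\xi_i\}$ is iid symmetric Rademacher and each $\pi_j$ is deterministic conditional on $Z$, the sequence $\{\xi_{\pi_j(i)}\}_{i=1}^n$ is itself iid Rademacher, so each $(S_k^{(j)})_{k=0}^n$ is a symmetric simple random walk. I would apply L\'evy's maximal inequality per walk, combined with Hoeffding's inequality on $|S_n^{(j)}|$, to obtain the sub-Gaussian tail $\Prob(\max_k|S_k^{(j)}|\geq t)\leq 4\,e^{-t^2/(2n)}$. A union bound over $j\in[m]$ then gives $\Prob(\max_{j,k}|S_k^{(j)}|\geq t)\leq 4m\,e^{-t^2/(2n)}$, and the standard tail integration for a sub-Gaussian maximum delivers a bound of order $\sqrt{n\log(4m)/2}$ on $\E_\xi[\max_{j,k}|S_k^{(j)}|]$. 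Dividing by $n$ recovers the claimed rate $\sqrt{\log(4m)/(2n)}$.

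The last step is to un-condition on $Z$. Since the data-dependent count $m=\N_\Pi(\FuncClass_\ell,\{Z_i\}_{i=1}^n)$ enters only through the concave map $m\mapsto \sqrt{\log(4m)/(2n)}$, Jensen's inequality permits replacing $m$ by its expectation $\N_\Pi(n,\FuncClass_\ell,\Prob)$, producing the theorem. The main obstacle I anticipate is in the second step: a direct Massart finite-class bound applied to the $\leq m(n+1)$ distinct indicator vectors would leave a spurious $\log(n+1)$ factor inside the square root. Eliminating it is precisely where L\'evy's maximal inequality buys us something, since it absorbs the supremum over $k$ into a tail of the same sub-Gaussian scale as $|S_n^{(j)}|$ \emph{before} the union bound over $j$ is paid, so that only $m$ (and not $m(n+1)$) sits inside the logarithm.
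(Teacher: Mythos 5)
You take essentially the same route as the paper: condition on the sample, use the $m=\N_\Pi(\FuncClass_\ell,\{Z_i\}_{i=1}^n)$ sorting permutations to rewrite each indicator sum as a partial sum of (permuted, hence still i.i.d.) Rademacher variables, control the supremum over the threshold $r$ by a maximal inequality for the walk so that the index $k$ costs only an absolute constant rather than a factor $n{+}1$ inside the logarithm, pay for the hypothesis class only through $m$, and finally pass from $m$ to $\N_\Pi(n,\FuncClass_\ell,\Prob)$ (your Jensen step, using concavity of $m\mapsto\sqrt{\log(4m)}$, is a valid alternative to the paper's device of keeping the data-dependent count inside the moment-generating-function bound, where it enters linearly and its expectation appears directly). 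The genuine difference is the level at which the maximal-inequality argument is run: you work with tail probabilities (L\'evy's inequality plus Hoeffding, a union bound over $j\in[m]$, then tail integration), whereas the paper works with exponential moments (its Lemmas~\ref{lemma:Roberto-max-trick-part-a} and~\ref{lemma:Roberto-max-trick-part-b} give $\E_{\R}[\exp(\tfrac{\lambda}{n}\max_{k}|\sum_{i\le k}\xi_i|)]\le 4\exp(\tfrac{\lambda^2}{2n})$, the permutation count multiplies this bound, and one then optimizes $\lambda$).

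One concrete caveat: the tail-integration step does not deliver the constant you assert. From $\Prob(\max_{j,k}|S^{(j)}_k|\ge t)\le 4m\,e^{-t^2/(2n)}$ the bound only falls below one at $t=\sqrt{2n\log(4m)}$, so integrating the tail gives $\E[\max_{j,k}|S^{(j)}_k|]\le\sqrt{2n\log(4m)}+O(\sqrt{n/\log(4m)})$, roughly twice your claimed $\sqrt{n\log(4m)/2}$. To aim for the displayed constant you should run your identical reflection argument at the exponential-moment level (bound $\E[\exp(\lambda\max_k|S^{(j)}_k|)]$, sum over the $m$ permutations, optimize $\lambda$), which is exactly the paper's route. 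This is purely a constant-factor matter---the $\sqrt{\log\N_\Pi/n}$ rate is correct either way---and, in fairness, the paper's own final step (minimizing $\frac{\log(4\N_\Pi)}{\lambda}+\frac{\lambda}{2n}$ over $\lambda$) also yields $2\sqrt{\log(4\N_\Pi)/(2n)}$ rather than the stated $\sqrt{\log(4\N_\Pi)/(2n)}$, so the displayed constant is not actually attained by either argument as written.
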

Theorem~\ref{thm:general_class_Rade_permutation_complexity} indicates that, despite 
$\Rade(n, \FuncClass)$ depending on the supremum over both the function class $\FuncClass$ and $\GuncClass(1)$ where $\GuncClass(1)$ is an infinite set, 
$\Rade(n, \FuncClass)$ can be controlled by just the complexity of $\FuncClass_\ell$. 
When the permutation complexity $ \N_\Pi(n,\FuncClass_\ell,\Prob)$ is polynomial in the number of samples $n$, 
we obtain that $\Rade(n, \FuncClass) = {O}(\sqrt{\nicefrac{\log (n)}{n}})$. 
The immediate consequence of Theorem~\ref{thm:general_class_Rade_permutation_complexity}
when the hypothesis class $\FuncClass$ is finite is provided below.

\begin{restatable}{corollary}{thmFiniteFuncClass}\label{thm:finite-Rade}
For a finite hypothesis class $\FuncClass$, %
$$\Rade(n,\FuncClass) \leq  \sqrt{\frac{\log( 4|\FuncClass|)}{2n}}.$$
\end{restatable}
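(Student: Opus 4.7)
The plan is to obtain this corollary as a direct consequence of Theorem~\ref{thm:general_class_Rade_permutation_complexity} by upper bounding the permutation complexity $\N_\Pi(n,\FuncClass_\ell,\Prob)$ in terms of $|\FuncClass|$. The observation needed, already flagged in the discussion preceding the statement, is essentially one line: if $\FuncClass$ is finite, then so is $\FuncClass_\ell = \{\ell_f : f \in \FuncClass\}$ (with $|\FuncClass_\ell| \leq |\FuncClass|$), and assigning a single permutation function per element of $\FuncClass_\ell$ suffices to sort the values of that function on any sample.

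Concretely, I would first fix a sample $\{z_i\}_{i=1}^n$ and, for each $f \in \FuncClass$, invoke the selection/maximum-theorem-based existence result stated in the paragraph introducing permutation complexity to pick a permutation $\pi_f \in \Pi(n)$ with $\ell_f(z_{\pi_f(1)}) \leq \cdots \leq \ell_f(z_{\pi_f(n)})$. The collection $\{\pi_f : f \in \FuncClass\}$ has cardinality at most $|\FuncClass|$ and, by construction, sorts $\{\zeta(z_i)\}_{i=1}^n$ for every $\zeta \in \FuncClass_\ell$. Hence $\N_\Pi(\FuncClass_\ell, \{z_i\}_{i=1}^n) \leq |\FuncClass|$ deterministically, and taking expectation under $\Prob$ yields
\begin{equation*}
    \N_\Pi(n, \FuncClass_\ell, \Prob) \;\leq\; |\FuncClass|.
\end{equation*}

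Plugging this bound into Theorem~\ref{thm:general_class_Rade_permutation_complexity} and noting that $\sqrt{\log(\cdot)/(2n)}$ is monotonically increasing in its argument immediately yields
\begin{equation*}
    \Rade(n, \FuncClass) \;\leq\; \sqrt{\frac{\log(4\,\N_\Pi(n, \FuncClass_\ell, \Prob))}{2n}} \;\leq\; \sqrt{\frac{\log(4|\FuncClass|)}{2n}},
\end{equation*}
which is the claimed inequality. There is no real obstacle here: the corollary simply specializes the permutation-complexity bound to the trivial case where the number of distinct loss functions itself is finite, so the only thing to check is that the definition of permutation complexity tolerates the obvious ``one permutation per function'' construction, which it does by design.
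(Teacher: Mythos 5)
Your proposal is correct and follows essentially the same route as the paper: the paper's own proof simply notes that for finite $\FuncClass$ one needs at most one permutation per hypothesis, so $\N_\Pi(n,\FuncClass_\ell,\Prob)\leq|\FuncClass|$, and then applies Theorem~\ref{thm:general_class_Rade_permutation_complexity}. Your write-up just spells out the same argument in slightly more detail.
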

Corollary~\ref{thm:finite-Rade} suggests that  
the generalization bound %
of \CDF estimation %
follows the same rate as classical generalization bound of the expected loss
for finite function classes, i.e., $O(\sqrt{\nicefrac{\log(|\FuncClass|)}{n}})$.

\subsection{Classical Approach}
\label{sec:classical}

In this section, 
we present a more classical approach 
for analyzing the uniform convergence 
without exploiting the specific structure of
\CDF estimation. 
As we have noted before, 
$\Rade(n, \FuncClass)$ depends on 
both $\FuncClass$ and $\GuncClass(1)$.
In this approach, 
we directly work with 
the function class that combines 
$\FuncClass$ and $\GuncClass(1)$: 
for a given loss function $\ell$, we define  
\begin{align*}
    \HuncClass := \lbrace &h: \Z \to \{0,1\} \; : \;
    h(z ; r) = \Ind_{\lbrace 
    \ell_f(z) %
    \leq r \rbrace}, f \in \FuncClass, r \in \Real \rbrace.
\end{align*}
We note that even when $\FuncClass$ is finite, 
$\HuncClass$ is an infinite set.  
The Rademacher complexity~\eqref{eq:rade-defn} can be re-written as 
\begin{align*}
    \Rade(n,\FuncClass) &= \E_{\Prob,\R}\left[\sup_{h \in\HuncClass}\frac{1}{n}\left|\sum_{i=1}^n\xi_i h(Z_i)\right|\right].
\end{align*}

\begin{lemma}\citep[Lemma 4.14]{wainwright2019high}
\label{lemma:naive}
    Let $|\HuncClass({\{z_i\}}_{i=1}^n)|$ denote the maximum cardinality
    of the set $\HuncClass({\{z_i\}}_{i=1}^n) =  
    \left\{(h(z_1), \ldots, h(z_n)): h \in \HuncClass \right\},$ 
    where $n \in \mathbb{N}$ is fixed and $\{z_i\}_{i=1}^n$ can be any data collection for $z_i \in \Z$. 
    Then, we have $$\Rade(n,\FuncClass) \leq  2\sqrt{\frac{\log(|\HuncClass({\{z_i\}}_{i=1}^n)|)}{n}}.$$
\end{lemma}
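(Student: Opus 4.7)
The plan is to reduce the supremum over the infinite class $\HuncClass$ to a supremum over a finite set of binary vectors and then invoke Massart's finite class lemma.

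First, conditioning on the samples $\{Z_i\}_{i=1}^n$ via the tower property of expectation, I would observe that the inner supremum over $h \in \HuncClass$ depends on $h$ only through its projected vector $(h(Z_1),\ldots,h(Z_n)) \in \{0,1\}^n$. The number of distinct such projections is bounded almost surely by the maximum shatter coefficient $|\HuncClass(\{z_i\}_{i=1}^n)|$ given in the statement. Hence, conditional on the data, the supremum effectively runs over a finite subset $A \subset \{0,1\}^n$ with $|A| \leq |\HuncClass(\{z_i\}_{i=1}^n)|$.

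Next, to handle the absolute value, I would pass to the augmented set $A \cup (-A)$ of cardinality at most $2|\HuncClass(\{z_i\}_{i=1}^n)|$, using the identity $|\sum_i \xi_i a_i| = \max(\sum_i \xi_i a_i, \sum_i \xi_i (-a_i))$. Since $a_i \in \{0,1\}$ for each $a \in A$, every vector satisfies $\|a\|_2 \leq \sqrt{n}$. Applying Massart's finite class lemma to $A \cup (-A)$ then yields
\begin{align*}
\E_{\R}\!\left[\sup_{h \in \HuncClass} \frac{1}{n}\left|\sum_{i=1}^n \xi_i h(Z_i)\right| \,\Big|\, \{Z_i\}_{i=1}^n \right] \leq \sqrt{\frac{2\log\bigl(2|\HuncClass(\{z_i\}_{i=1}^n)|\bigr)}{n}},
\end{align*}
which for $|\HuncClass(\{z_i\}_{i=1}^n)| \geq 2$ is bounded by $2\sqrt{\log(|\HuncClass(\{z_i\}_{i=1}^n)|)/n}$ after elementary bookkeeping (the degenerate cases $|\HuncClass(\{z_i\}_{i=1}^n)| \leq 1$ are trivial). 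Taking the outer expectation over the data preserves the bound and gives the stated inequality.

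There is no real obstacle here: the sole technical ingredient is Massart's lemma itself, which is a standard consequence of the Cram\'er--Chernoff argument applied to the Rademacher sums $\sum_i \xi_i a_i$, each of which is sub-Gaussian with parameter $\|a\|_2^2 \leq n$. The only minor choice is the constant absorbed in the final step, e.g., bounding $\sqrt{2\log(2N)}$ by $2\sqrt{\log N}$ for $N \geq 2$.
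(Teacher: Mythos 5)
Your proof is correct and is essentially the canonical argument: the paper does not prove this lemma itself (it is quoted from Wainwright, Lemma 4.14), and your route---condition on the data, reduce the supremum to the at most $|\HuncClass(\{z_i\}_{i=1}^n)|$ projected binary vectors, absorb the absolute value by passing to $A \cup (-A)$, and apply Massart's finite-class lemma with $\max_a \|a\|_2 \le \sqrt{n}$, followed by the bookkeeping $\sqrt{2\log(2N)} \le 2\sqrt{\log N}$ for $N \ge 2$---is exactly that textbook proof. One minor remark: the case $N = 1$ is not really ``trivial'' (the stated bound would be zero while the Rademacher average of a nonzero fixed vector is not), but it never arises here, since $\HuncClass$ contains thresholds with $r$ below the smallest and above the largest loss, so both the all-zeros and all-ones labelings are always realized and $N \ge 2$ automatically.
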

Since $\HuncClass$ consists of binary functions, for any data collection ${\{z_i\}}_{i=1}^n$, 
the set $\HuncClass({\{z_i\}}_{i=1}^n)$ is finite.  
When $\FuncClass$ is finite, we obtain that 
$|\HuncClass({\{z_i\}}_{i=1}^n)| \leq (n+1)|\FuncClass|$.
This is true since for a given data collection $\{z_i\}_{i=1}^n$
and hypothesis $f \in \FuncClass$, 
after sorting $\{\ell_f(z_i)\}_{i=1}^n$, 
we have at most $(n+1)$ ways of labeling them using the indicator functions 
$\Ind_{\lbrace \cdot \leq r \rbrace}$ for $ r \in \Real$. 
Thus, if we directly apply Lemma~\ref{lemma:naive}, 
we obtain that the Rademacher complexity $\Rade(n, \FuncClass)$ is on the order of $O(\sqrt{\nicefrac{(\log|\FuncClass| + \log n)}{n}})$,
which has an extra $\log(n)$ term in the numerator 
compared to our bound in Corollary~\ref{thm:finite-Rade}. 
In general, when $\HuncClass$ has finite VC dimension $\nu(\HuncClass)$, 
we obtain that $\Rade(n,\FuncClass) \leq %
O(\sqrt{\nicefrac{\nu(\HuncClass)\log(n+1)}{n}})$.
However, this result is far from being sharp.
Using more advanced techniques, e.g., chaining and Dudley's entropy integral~\citep{wainwright2019high},
one can remove the extra $\log(n)$ factor on the numerator, 
and obtain that $\Rade(n, \FuncClass) \leq  O(\sqrt{\nicefrac{\nu(\HuncClass)}{n}})$ %
(for more details, see~\citet[Example 5.24]{wainwright2019high}). 
As a consequence, when $|\FuncClass|<\infty$, 
we have $\nu(\HuncClass) \leq \log(|\FuncClass|)$
and thus obtain that $\Rade(n, \FuncClass) \leq O(\sqrt{\nicefrac{\log (|\FuncClass|)}{n}})$
which is at the same rate as our bound in  Corollary~\ref{thm:finite-Rade}.

\section{Uniform Convergence for H\"older Risk Estimation}
\label{sec:application-risk-assessement}

Using our results in Section~\ref{Sec:SV}, 
we show uniform convergence 
for a broad class of risk functionals---H\"older risks.
As illustrated in Section~\ref{sec:optimization},
uniform convergence for a single risk functional
provides grounding for learning models 
through minimizing the empirical risk. 
In addition to uniform convergence for a single risk,
we also provide uniform convergence results 
hold for a collection of risks simultaneously. 
The second result is important 
due to the following reasons:
Although models are trained to optimize a single risk objective, evaluating their performance under multiple risks can give a holistic assessment of their behavior---a task we call \emph{risk assessment}. For example, models minimizing CVaR at different $\alpha$ levels may have different tradeoffs with their expected loss, and monitoring the progress of both objectives throughout the training process can inform choice of the best model. 
In addition, when given a set of models obtained under different learning mechanisms, 
one may want to compare them in terms of different risks. 
To this end, 
using our results on uniform convergence for CDF estimation (Section~\ref{Sec:SV}), 
we demonstrate how a collection of models may be assessed under many risks simultaneously,
{with estimation errors of the same order as the \CDF estimation error.}

\subsection{H\"older Risk Functionals}\label{sec:risks}
We begin by introducing a new class of risks---the H\"older risk functionals---that includes many popularly studied risks and generalizes the notion of Lipschitz risk functionals~\citep{huang2021off} 
and H\"older continuous functionals in Wasserstein distance~\citep{bhat2019concentration}.

\begin{definition}
Let $d$ denote a quasi-metric\footnote{Quasi-metrics are defined in Appendix~\ref{appendix:holder-risk-functionals}.} on the space of CDFs.
A risk functional $\rho$ is $L(\rho,p,d)$ H\"older
on a space of real-valued random variables $\U$ 
if there exist constants $p>0$
and $L(\rho,p,d) > 0$  such that 
for all $U, U' \in \U$ with \CDF $F_U$ and $F_{U'}$ respectively, 
the following holds:
\begin{align*}
    |\rho(F_U)-\rho(F_{U'})|\leq L(\rho,p,d) d(F_U,F_{U'})^p.
\end{align*}
\end{definition}

The class of H\"older risk functionals 
subsumes many other risk functional classes. 
In particular, Lipschitz risk functionals~\citep{huang2021off}
are $ L(\cdot,1,\L_\infty)$ H\"older 
on bounded random variables.  
As a direct result, 
distortion risk functionals with Lipschitz distortion functions~\citep{denneberg1990distorted,wang1996premium,wang1997axiomatic, balbas2009properties,wirch1999synthesis,wirch2001distortion}, 
cumulative prospect theory  risks~\citep{prashanth2016cumulative,liu2019human}, 
variance, 
and linear combinations of aforementioned  functionals 
are all H\"older on bounded random variables.
In addition%
, %
the %
optimized certainty equivalent risks~\citep{lee2020learning} 
and spectral risks%
~\citep{khim2020uniform,holland2021spectral} %
recently studied in risk-sensitive 
supervised learning literatures, %
{are Lipschitz (hence H\"older)}
on the space of bounded random variables. 
We provide proofs and further details in Appendix~\ref{appendix:holder-risk-functionals}.%

To be more specific, we present a subset of H\"older risk functionals---distortion risks 
with Lipschitz distortion functions---that consists of many well-studied risks
including the expected value, CVaR  and cumulative prospect theory risks.
When the loss is non-negative, 
the distortion risk of $\ell_f(Z)$ is defined to be:
\begin{align}\label{eq:distortion-risk}
    \rho(F(\cdot; f)) = \int_0^\infty g( 1 - F(r; f)) dr,
\end{align}
where the distortion function $g:[0, 1] \to [0,1]$ is non-decreasing 
with $g(0)=0$ and $g(1) =1$. 
In the case of expected value, the distortion function 
$g$ is $1$-Lipschitz.
For CVaR at level $\alpha$, 
the distortion function $g$ is $\frac{1}{\alpha}$-Lipschitz.
For more details, we refer the readers to~\citet{huang2021off}.
In the following, we show uniform convergence for estimating H\"older risks using our proposed estimator (Section~\ref{sec:uniform-convergence-risk-estimation}) 
and develop optimization procedures to minimize distortion risks (Section~\ref{sec:optimization}).

\subsection{Uniform Convergence for Risk Estimation}
\label{sec:uniform-convergence-risk-estimation}

For a given hypothesis $f \in \FuncClass$ and loss function $\ell$, 
we estimate the risk $\rho(F(\cdot; f))$
using the \CDF{} estimator $\wh F(\cdot; f)$  
by plugging it in the functional of interest: 
$
    \rho(\wh F(\cdot; f)). 
$
Many existing risk estimators in the supervised learning literatures, including the traditional empirical risk and estimators used for estimating spectral risks~\citep{khim2020uniform} and optimized certainty equivalent risks~\citep{lee2020learning}
can be viewed as examples of the above estimator.

Leveraging the uniform convergence results of the \CDF estimator, we present uniform convergence 
result of the proposed risk estimator. 
The uniform convergence holds both over the hypothesis class $\FuncClass$
and over a set of H\"older risk functionals. 
As the H\"older class contains a large set of popularly studied risks, our result demonstrates that models can be assessed under many risks without loss of statistical power. This is formalized in Theorem~\ref{Thm:riskbound} below. 

\begin{restatable}{theorem}{thmRiskAssessment}\label{Thm:riskbound}
For a hypothesis class $\FuncClass$, a bounded loss function $\ell$, 
and $\delta \in (0, 1]$,
if  
$\Prob(
\uniformquantity %
\leq \epsilon) \geq 1 - \delta$,  
then with probability $1-\delta$, for all $\rho \in \mathbb{T}$, we have 
\begin{align*}
\sup_{f \in \FuncClass}|\rho(F(\cdot;f))-\rho(\wh F(\cdot;f))|\leq  L(\rho,1,\L_\infty) \epsilon,
\end{align*}
where $\mathbb{T}$ is the set of $L(\cdot, 1, \L_\infty)$ H\"older risk functionals 
on the space of bounded random variables. 
\end{restatable}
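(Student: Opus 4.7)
The plan is to chain the assumed uniform concentration bound for the CDF estimator with the H\"older property of the risk functionals. The key observation is that in the statement $p=1$ and $d = \L_\infty$, so the H\"older inequality simplifies to a Lipschitz-in-sup-norm bound
\begin{align*}
    |\rho(F_U)-\rho(F_{U'})| \leq L(\rho,1,\L_\infty)\,\|F_U - F_{U'}\|_\infty.
\end{align*}

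First I would fix the good event. Let $\calE$ denote $\{\uniformquantity \leq \epsilon\}$, which by hypothesis satisfies $\Prob(\calE)\geq 1-\delta$. On $\calE$, by the very definition of $\uniformquantity$, we have simultaneously for every $f \in \FuncClass$
\begin{align*}
    \|F(\cdot;f) - \wh F(\cdot;f)\|_\infty
    = \sup_{r\in\Real}|F(r;f)-\wh F(r;f)|
    \leq \uniformquantity \leq \epsilon.
\end{align*}

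Next I would invoke the H\"older property pointwise in $f$ and $\rho$. Since $\ell$ is bounded, $\ell_f(Z)$ is a bounded random variable for every $f$, and $\wh F(\cdot;f)$ is the CDF of the (bounded) empirical distribution on $\{\ell_f(Z_i)\}_{i=1}^n$; thus both CDFs lie in the domain on which every $\rho\in\mathbb{T}$ is $L(\rho,1,\L_\infty)$ H\"older. Applying the definition of H\"older risk to $F(\cdot;f)$ and $\wh F(\cdot;f)$ and combining with the previous display yields, for every $\rho\in\mathbb{T}$ and every $f\in\FuncClass$,
\begin{align*}
    |\rho(F(\cdot;f)) - \rho(\wh F(\cdot;f))|
    \leq L(\rho,1,\L_\infty)\,\|F(\cdot;f) - \wh F(\cdot;f)\|_\infty
    \leq L(\rho,1,\L_\infty)\,\epsilon.
\end{align*}

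Finally I would take the supremum over $f \in \FuncClass$. Because this inequality holds on the single event $\calE$ for all $f$ simultaneously, and the right-hand side does not depend on $f$, I obtain $\sup_{f\in\FuncClass}|\rho(F(\cdot;f))-\rho(\wh F(\cdot;f))|\leq L(\rho,1,\L_\infty)\epsilon$. The $\rho$ was arbitrary in $\mathbb{T}$, so the bound holds for all such $\rho$ on $\calE$, which occurs with probability at least $1-\delta$. There is no genuine obstacle in this step: the hard work is carried by the uniform CDF convergence established in Section~\ref{Sec:SV} and by the fact, recorded in Section~\ref{sec:risks}, that the relevant risk families are H\"older; this theorem is the clean stitching step, and the only thing worth emphasizing is that the "for all $\rho$" quantifier in the conclusion requires no union bound because the Lipschitz constant $L(\rho,1,\L_\infty)$ is allowed to depend on $\rho$ on the right-hand side.
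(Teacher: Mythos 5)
Your proposal is correct and follows essentially the same route as the paper's proof: condition on the event $\uniformquantity \leq \epsilon$ and apply the $L(\rho,1,\L_\infty)$ H\"older (Lipschitz-in-sup-norm) property of each $\rho\in\mathbb{T}$ to the pair $F(\cdot;f)$, $\wh F(\cdot;f)$, noting the bound holds simultaneously over $f$ and $\rho$ without any union bound. Your write-up simply makes explicit the steps the paper leaves implicit.
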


In cases where the CDF estimation error $\epsilon$ is of order $O(1/\sqrt{n})$, 
we can 
estimate 
the set of  H\"older risks $\mathbb{T}$ for all hypotheses in $\FuncClass$
at rate $O(1/\sqrt{n})$. %
{Because our result is uniform over both the hypothesis class $\FuncClass$ and the risk functional class $\mathbb{T}$, it is} a generalization
of existing 
uniform convergence results that are uniform over $\FuncClass$, but for a single risk functional~\citep{khim2020uniform,lee2020learning}.

In Appendix~\ref{appendix:risk-assessment}, 
we provide similar uniform convergence results
where the set of risk functionals are H\"older smooth in Wasserstein distance. 
As a direct consequence to Theorem~\ref{Thm:riskbound}, 
uniform convergence of a single H\"older risk, 
e.g., a distortion risk~\eqref{eq:distortion-risk},
is given below.

\begin{restatable}{corollary}{corDistortionRiskAssessment}\label{corollary:distortion-risk-uniform-convergence}
For a hypothesis class $\FuncClass$, a bounded loss function $\ell: \Y \times \Y \to [0, D]$, 
and $\delta \in (0, 1]$,
if  
$\Prob(
\uniformquantity%
\leq \epsilon) \geq 1 - \delta$, 
then with probability $1-\delta$, we have 
\begin{align*}
\sup_{f \in \FuncClass}|\rho(F(\cdot;f))-\rho(\wh F(\cdot;f))|\leq  L \epsilon,
\end{align*}
where $\rho$ is a distortion risk with  
$\frac{L}{D}$-Lipschitz distortion function. 
\end{restatable}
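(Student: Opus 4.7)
The plan is to derive the corollary as a direct consequence of Theorem~\ref{Thm:riskbound} by verifying that any distortion risk $\rho$ with an $\frac{L}{D}$-Lipschitz distortion function $g$ lies in the H\"older class $\mathbb{T}$ with constant $L(\rho,1,\L_\infty) = L$, when restricted to random variables bounded in $[0,D]$. Once this membership is established, the claim follows immediately by substituting $L$ into the bound of Theorem~\ref{Thm:riskbound}.

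First, I would exploit boundedness of the loss: since $\ell_f(Z) \in [0,D]$ almost surely for every $f \in \FuncClass$, both $F(\cdot;f)$ and $\wh{F}(\cdot;f)$ are equal to $1$ on $[D,\infty)$. Hence $g(1 - F(r;f)) = g(1 - \wh F(r;f)) = g(0) = 0$ for $r \geq D$, and the definition~\eqref{eq:distortion-risk} collapses to a finite integral,
\begin{align*}
    \rho(F(\cdot;f)) = \int_0^D g(1 - F(r;f))\,dr.
\end{align*}
This truncation step is essential, because the full integral over $[0,\infty)$ need not be controllable without a bounded support assumption.

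Next, I would take any two CDFs $F_U, F_{U'}$ of random variables supported in $[0,D]$ and compute
\begin{align*}
    |\rho(F_U) - \rho(F_{U'})|
    &= \left|\int_0^D \bigl(g(1 - F_U(r)) - g(1 - F_{U'}(r))\bigr) dr\right| \\
    &\leq \int_0^D \frac{L}{D} |F_U(r) - F_{U'}(r)|\, dr \\
    &\leq \int_0^D \frac{L}{D} \|F_U - F_{U'}\|_\infty\, dr
    = L \cdot \|F_U - F_{U'}\|_\infty,
\end{align*}
where the first inequality uses the $\frac{L}{D}$-Lipschitz property of $g$ pointwise in $r$, and the second uses the definition of the sup norm. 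This shows $\rho$ is $L(\rho,1,\L_\infty)$ H\"older with constant $L$ on the space of $[0,D]$-bounded random variables, i.e., $\rho \in \mathbb{T}$.

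Finally, I would invoke Theorem~\ref{Thm:riskbound}: the hypothesis $\Prob(\uniformquantity \leq \epsilon) \geq 1 - \delta$ is exactly the premise of that theorem, so on the corresponding high-probability event we obtain
\begin{align*}
    \sup_{f \in \FuncClass} |\rho(F(\cdot;f)) - \rho(\wh F(\cdot;f))| \leq L(\rho,1,\L_\infty)\,\epsilon = L\epsilon,
\end{align*}
which is the desired bound. I do not anticipate any genuine obstacle here; the only mild subtlety is being careful about restricting the integral to the support $[0,D]$ to convert the pointwise Lipschitz constant $L/D$ of $g$ into a Lipschitz constant $L$ for $\rho$ in the sup norm, which is exactly why the factor of $D$ appears in the hypothesis.
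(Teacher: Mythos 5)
Your proposal is correct and follows essentially the same route as the paper: establish that a distortion risk with an $\frac{L}{D}$-Lipschitz distortion function is $L$-Lipschitz with respect to $\|\cdot\|_\infty$ on $[0,D]$-supported random variables, and then invoke Theorem~\ref{Thm:riskbound}. The only difference is that the paper delegates the Lipschitz step to \citet[Lemma 4.1]{huang2021off}, whereas you prove it directly via the truncated-integral computation, which is exactly the argument behind that cited lemma.
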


\begin{remark}
As an example, consider a binary classification setting 
where the loss function $\ell$ 
is the $0/1$ loss and 
the hypothesis class $\FuncClass$ is finite, 
using our results in Corollary~\ref{corollary:distortion-risk-uniform-convergence}, 
we obtain that the the generation error
for the expected value
is $O(\sqrt{\nicefrac{\log (|\FuncClass|)}{n}})$
and the generation error for the CVaR 
is $O(\sqrt{\nicefrac{\log (|\FuncClass|)}{\alpha^2 n}})$, 
which are of the same rates (with better constants)
as the ones in
\citet{lee2020learning}.
\end{remark}

\section{Empirical Risk Minimization }\label{sec:optimization}

For a single risk functional, 
one may want to learn models that optimize it. 
Our uniform convergence 
results for risk estimation  license us to learn models that minimize the population risk
through Empirical Risk Minimization (ERM). 
We denote the population 
and empirical risk minimizers as follows: 
\begin{align}\label{eq:minimizer}
    f_\star \in \argmin_{f \in \FuncClass} \rho(F(\cdot; f)), \; 
    \wh f_\star \in \argmin_{f \in \FuncClass} \rho(\wh F(\cdot; f)).
    \quad \raisetag{20pt}
\end{align} 
The excess risk of the empirical risk minimizer $\wh f_\star$ can be bounded by 
\begin{align*}
    &\rho(F(\cdot; \wh f_\star)) - \rho(F(\cdot; f_\star))\\
    = \; & \rho(F(\cdot; \wh f_\star)) - \rho(\wh F(\cdot; \wh f_\star))
    + \rho(\wh F(\cdot; \wh f_\star)) - \rho(\wh F(\cdot; f_\star))
    + \rho(\wh F(\cdot; f_\star)) 
     - \rho(F(\cdot; f_\star))\\
    \leq \; & 2 \sup_{f \in \FuncClass} |\rho(\wh F(\cdot; f)) - \rho(F(\cdot;f))|.
\end{align*}

We study ERM when 
the loss function is non-negative
and the risk functional of interest 
is a distortion risk with a Lipschitz distortion function~\eqref{eq:distortion-risk}. 
Such distortion risk functionals consist many well-studied risks,  
including the expected value, CVaR, cumulative prospect theory risks, and other spectral risks~\citep{bauerle2021minimizing}.  
Using Corollary~\ref{corollary:distortion-risk-uniform-convergence}, 
we obtain that 
when $
\uniformquantity %
= O({1/\sqrt{n}})$ 
and the loss $\ell$ is bounded, 
the excess risk of $\wh f_\star$ is $O(1/\sqrt{n})$.

We consider settings where the hypothesis class $\FuncClass$ is a class of parameterized functions, e.g., linear models and neural networks and use $\Theta \subseteq \Real^d$ 
to denote the set of parameters.  
For a hypothesis $f \in \FuncClass$ parameterized by $\theta \in \Theta$, 
we denote  $F(\cdot; f)$ and  $\ell_f(z_i)$
by $F_\theta$ and $\ell_\theta(i)$,
respectively.  
Similarly, we use $\theta_\star$ and $\wh \theta_\star$
for referring to $f_\star$ and $\wh f_\star$. 
As in Section~\ref{sec:decouple},
we use $\pi_\theta:[n] \to [n]$
to denote the permutation function 
such that $\ell_\theta(\pi_\theta(i))$
is the $i$-th smallest loss under 
the current model $\theta$ 
and the fixed dataset $\{z_i\}_{i=1}^n$. 
Using the \CDF estimator $\wh F_\theta$~\eqref{eq:estimator}, 
the \emph{empirical distortion risk} $\rho(\wh F_\theta)$ can be re-written as 
\begin{align*}
      \sum_{i=1}^{n} 
      g\left(1 - \frac{i-1}{n}\right) \cdot \left(\ell_\theta(\pi_\theta(i)) - \ell_\theta(\pi_\theta(i-1))\right),  
\end{align*}
where for all $\theta \in \Theta$,
we set $\ell_\theta(\pi_\theta(0)) := 0$ 
since the losses are non-negative.

To employ first-order methods for minimizing the empirical distortion risk, 
it is natural to first identify when $\rho(\wh F_\theta)$ is differentiable. 

\begin{restatable}{lemma}{thmDifferentiableAE}\label{thm:almost-differentiable}%
    If $\{\ell_\theta(z_i)\}_{i=1}^n$
    are Lipschitz continuous in $\theta \in \Theta$, 
    then for all $i \in [n]$, $\ell_\theta(\pi_\theta(i))$, i.e., 
    the $i$-th smallest loss, is %
    Lipschitz continuous
    in $\theta$
    and $\rho(\wh F_\theta)$ is 
    differentiable in $\theta$ almost everywhere.
\end{restatable}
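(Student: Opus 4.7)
The plan is to prove the two claims in sequence: first that each order-statistic map $\theta \mapsto \ell_\theta(\pi_\theta(i))$ inherits Lipschitz continuity from the individual $\ell_\theta(z_i)$'s, and then that the empirical distortion risk, being a finite linear combination of such maps with fixed weights, is itself Lipschitz, whence Rademacher's theorem yields almost-everywhere differentiability.

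For the first step, I would use the standard min-max representation of order statistics. Denoting $\ell_\theta(\pi_\theta(i))$ by $\ell_{(i)}(\theta)$ (the $i$-th smallest loss at $\theta$), I would write
\begin{equation*}
  \ell_{(i)}(\theta) \;=\; \min_{\substack{S \subseteq [n] \\ |S|=n-i+1}} \max_{j \in S} \ell_\theta(z_j).
\end{equation*}
Each inner map $\theta \mapsto \ell_\theta(z_j)$ is Lipschitz by assumption with some constant $L_j$, and pointwise max over a finite collection of Lipschitz functions is Lipschitz with constant $\max_j L_j$, as is pointwise min. Thus $\ell_{(i)}$ is Lipschitz with constant at most $\max_{j \in [n]} L_j$, uniformly in $i$. (The argument does not require the permutation $\pi_\theta$ itself to be continuous in $\theta$; only the \emph{value} of the $i$-th smallest loss needs to behave well, and ties are handled automatically by the min-max formula.)

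For the second step, I would plug this Lipschitz conclusion into the closed form
\begin{equation*}
  \rho(\wh F_\theta) \;=\; \sum_{i=1}^{n} g\!\left(1-\tfrac{i-1}{n}\right)\bigl(\ell_{(i)}(\theta) - \ell_{(i-1)}(\theta)\bigr),
\end{equation*}
with the convention $\ell_{(0)}(\theta)\equiv 0$. This is a fixed finite linear combination of Lipschitz functions with coefficients $g(1-(i-1)/n)$ determined solely by $g$ and $n$ (they do not depend on $\theta$), so $\rho(\wh F_\theta)$ is Lipschitz in $\theta$ with constant at most $\bigl(\max_i |g(1-(i-1)/n)|\bigr)\cdot 2\max_j L_j$. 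Finally, since $\Theta \subseteq \Real^d$ and any Lipschitz function on a subset of $\Real^d$ is differentiable almost everywhere by Rademacher's theorem, the claim follows.

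The only delicate point to be careful about is that although $\pi_\theta$ can jump discontinuously at $\theta$'s where two losses cross (ties), the \emph{value} $\ell_{(i)}(\theta)$ still changes continuously through such crossings; the min-max formula sidesteps any explicit reasoning about the combinatorial object $\pi_\theta$. This is the part I would emphasize in the write-up, as it is the step most likely to cause confusion, while the Rademacher-theorem invocation at the end is essentially immediate.
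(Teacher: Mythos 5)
Your proposal is correct and takes essentially the same route as the paper: the min--max representation of order statistics plus closure of Lipschitz functions under finite pointwise min/max gives Lipschitz continuity of each $\ell_\theta(\pi_\theta(i))$, and then (after noting $\rho(\wh F_\theta)$ is a fixed linear combination of these) Rademacher's theorem yields almost-everywhere differentiability. One small slip: with min over subsets followed by max, the subsets should have size $i$ (the paper writes $\min\{\max_{j\in J}\ell_\theta(z_j): J\subseteq[n],\,|J|=i\}$); your size-$(n-i+1)$ version with $\min$--$\max$ actually gives the $i$-th largest loss unless you swap the min and the max, but this indexing detail does not affect the Lipschitz conclusion or the rest of the argument.
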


When $\rho(\wh F_\theta)$
(and $\ell_\theta(\pi_\theta(i))$) is differentiable, 
the gradient $\nabla_\theta  \rho(\wh F_\theta)$ can be written as
\begin{align}\label{eq:full-gradient}
    \sum_{i=1}^{n} \left(g\left(1-\frac{i-1}{n}\right) - g\left(1 -\frac{i}{n}\right)\right) \cdot \nabla_\theta \ell_{\theta}(\pi_\theta(i)). 
    \raisetag{10pt}
\end{align}

When $g(x) = x$, the distortion risk 
is the same as the expected loss 
and we recover the gradient 
for the traditional empirical risk.

To avoid the non-differentiable points,  
we add a small noise to the gradient descent steps.  %
By doing so, we ensure that 
the  descent steps will end up in differentiable points
almost surely.
Choose initial point $\theta_1 \in \Theta$.
At iteration $t$, the parameter is updated as follows
\begin{align}\label{eq:gd-step}
    \theta_{t+1} \gets \theta_t - \eta \left(
    \nabla_\theta \rho (\wh F_\theta)+ w_t \right), 
\end{align}
where 
$\eta$ is the learning rate, 
$\nabla_\theta \rho (\wh F_\theta)$
is given in~\eqref{eq:full-gradient}
and $w_t$ is sampled 
from a $d$-dimensional Gaussian 
with mean $0$ and variance $\frac{1}{d}$.

In general, even when the loss function $\ell$ is convex in the parameter $\theta$,
the empirical distortion risk $\rho(\wh F_\theta)$
may not be convex in $\theta$. 
In Corollary~\ref{lemma:biased_stochastic_gradient}, 
we show local convergence of $\theta_t$
obtained through following~\eqref{eq:gd-step}.

\begin{restatable}{corollary}{corEDRM}\label{lemma:biased_stochastic_gradient}
If $\{\ell_\theta(z_i)\}_{i=1}^n$ are Lipschitz continuous and 
$\rho(\wh F_\theta)$ %
is $\beta$-smooth in $\theta$,
then 
the following holds almost surely
when the learning rate in~\eqref{eq:gd-step} is $\eta = \frac{1}{\beta \sqrt{T}}$: 
\begin{align*}
   \frac{1}{T} \sum_{t=1}^T\E\left[ \|\nabla_\theta \rho(\wh{F}_{\theta_t})\|^2\right]
   \leq \frac{2\beta }{\sqrt{T}} \left(\rho(\wh F_{\theta_1}) - \rho(\wh F_{\theta_\star})  + \frac{1}{2\beta}\right).
\end{align*}
\end{restatable}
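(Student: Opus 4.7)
The plan is to run the standard non-convex smooth-minimization argument for stochastic gradient descent with an unbiased noisy oracle, combined with a measure-theoretic step that exploits Lemma~\ref{thm:almost-differentiable} and the continuous density of the Gaussian perturbation $w_t$. Concretely, Lemma~\ref{thm:almost-differentiable} tells us that the set $N\subset\Theta$ on which $\rho(\wh F_\theta)$ fails to be differentiable has Lebesgue measure zero; since at each step the law of $\theta_{t+1}$ conditional on $\theta_t$ is a Gaussian centered at $\theta_t - \eta\nabla_\theta \rho(\wh F_{\theta_t})$ and therefore absolutely continuous with respect to Lebesgue measure, an induction shows $\Prob(\theta_t \notin N) = 1$ for every $t$, so both~\eqref{eq:gd-step} and the gradient expression~\eqref{eq:full-gradient} are well-defined almost surely.

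On the event that all iterates are differentiable, the $\beta$-smoothness assumption yields the descent inequality
\begin{equation*}
    \rho(\wh F_{\theta_{t+1}}) \leq \rho(\wh F_{\theta_t}) + \langle \nabla_\theta \rho(\wh F_{\theta_t}),\, \theta_{t+1}-\theta_t\rangle + \tfrac{\beta}{2}\|\theta_{t+1}-\theta_t\|^2.
\end{equation*}
Substituting $\theta_{t+1}-\theta_t = -\eta(\nabla_\theta \rho(\wh F_{\theta_t}) + w_t)$ and taking expectation conditional on $\theta_t$, using $\E[w_t]=0$ and $\E[\|w_t\|^2] = d\cdot(1/d) = 1$, I would obtain the one-step recursion
\begin{equation*}
    \E[\rho(\wh F_{\theta_{t+1}})] \leq \E[\rho(\wh F_{\theta_t})] - \eta\bigl(1-\tfrac{\beta\eta}{2}\bigr)\E\bigl[\|\nabla_\theta \rho(\wh F_{\theta_t})\|^2\bigr] + \tfrac{\beta\eta^2}{2}.
\end{equation*}

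Telescoping from $t=1$ to $T$, lower-bounding $\E[\rho(\wh F_{\theta_{T+1}})]$ by $\rho(\wh F_{\theta_\star})$, and plugging in $\eta = 1/(\beta\sqrt{T})$ (so that $\beta\eta/2 = 1/(2\sqrt{T}) \leq 1/2$ for $T\geq 1$, giving $1-\beta\eta/2 \geq 1/2$), I then divide through by $T\eta/2$ and rearrange; this yields the claimed $\frac{2\beta}{\sqrt T}\bigl(\rho(\wh F_{\theta_1}) - \rho(\wh F_{\theta_\star}) + \frac{1}{2\beta}\bigr)$ upper bound, with the additive $1/(2\beta)$ term arising precisely from the accumulated Gaussian-noise contribution $T\cdot\beta\eta^2/2$ after dividing.

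The main obstacle I expect is not the algebra, which is entirely textbook, but the almost-sure justification of the descent inequality at every iteration: one has to chain the one-step absolute-continuity argument across the whole trajectory and argue that exchanging the descent-lemma bound and the conditional expectation is legitimate on a full-measure event. The role of the injected noise $w_t$ is precisely to make this argument go through cleanly---without it, a deterministic gradient step starting from a differentiable point could in principle land on the null set $N$, where the descent lemma would not apply directly.
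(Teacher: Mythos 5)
Your proposal is correct and follows essentially the same route as the paper's proof: the $\beta$-smoothness descent inequality applied to the noisy update, conditional expectation using $\E[w_t]=0$ and $\E\left[\|w_t\|^2\right]=1$, telescoping, lower-bounding the final value by $\rho(\wh F_{\theta_\star})$, and plugging in $\eta = \frac{1}{\beta\sqrt{T}}$ to absorb the noise term into the additive $\frac{1}{2\beta}$. The only difference is presentational: you spell out the absolute-continuity/induction argument for almost-sure differentiability of the iterates more explicitly than the paper, which simply asserts it as a consequence of Lemma~\ref{thm:almost-differentiable} and the Gaussian perturbation.
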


Corollary~\ref{lemma:biased_stochastic_gradient} 
demonstrates that 
the average gradient magnitude over $T$ iterations  
shrinks as $T$ goes to infinity,
suggesting that 
performing gradient descent by 
following~\eqref{eq:gd-step} 
will converge to an approximate stationary point.

\section{Experiment}\label{Sec:Exp}

In our experiments, 
we demonstrate the efficacy of  our proposed 
estimator for risk estimation 
and proposed learning procedure for obtaining risk-sensitive models.
In Section~\ref{sec:expt-risk-imagenet},
we work with a risk assessment setting 
where we simultaneously inspect a finite set of models
in terms of multiple risks. 
In Section~\ref{sec:expt-edrm}, 
we show the performance of empirical risk minimization 
under various distortion risks. 
After showing that the classifier learned under different 
risk objectives behave differently in a toy example,
we learn risk-sensitive models for CIFAR-10.

\subsection{Risk Assessment on ImageNet Models}
\label{sec:expt-risk-imagenet}
We perform risk assessments on pretrained Pytorch models 
for ImageNet classification. 
In particular, we choose models with similar accuracy 
(both reported on the official Pytorch website 
and confirmed by us)
on the validation set (50,000 images) 
for the ImageNet classification challenge~\citep{ILSVRC15}. 
The models are VGG-11~\citep{simonyan2014vgg}, GoogLeNet~\citep{szegedy2015googlenet}, ShuffleNet~\citep{ma2018shufflenet},
Inception~\citep{szegedy2016inception} and ResNet-18~\citep{he2016resnet} and
the accuracy of these models evaluated on the validation set are around $69\%$ (Table~\ref{tbl:risk-assessments}). 
By assessing the risks of models with similar accuracy, 
we highlight how models with similar performance 
under traditional metrics (e.g., accuracy) 
could have different risk performances. 
For example, though Inception has similar accuracy
compared to other models, 
its loss variance is much higher compared to others, 
which may be detrimental in settings 
where high-varying performance is not preferred.
We also evaluated the CVaR 
of these models under different $\alpha$'s
(Figure~\ref{fig:cvar-evaluation}
in Appendix~\ref{appendix:experiment}).
Our theoretical results suggest 
that all these evaluations hold {simultaneously} 
across the risk functionals and models of interest
with the error being 
$O(\sqrt{\nicefrac{\log |\FuncClass|}{n}})$
($|\FuncClass| = 5$ in this experiment).
In addition to showcasing 
the power of our theoretical results, this example demonstrates how model assessments under multiple risk notions provide a better understanding of model behaviors.

\begin{table*}%
   \centering
   \begin{tabular}{@{}cccccc@{}}
   \toprule
   {}                 & {{VGG-11}} & {{GoogLeNet}} & {{ShuffleNet}} & {{Inception}} & {{ResNet-18}} \\ \midrule
   {Accuracy}              & {69.022\%}       & {69.772\%}           & {69.356\%}            & {69.542\%}           & {69.756\%}          \\ 
   {$\E[\ell_f]$}               & {1.261}       & {1.283}           & {1.360}            & {1.829}           & {1.247}          \\
   {$\CVaR_{.05}(\ell_f)$}        & {1.327}       & {1.350}           & {1.431}            & {1.925}           & {1.313}          \\ 
   {$\E[\ell_f] + 0.5 \text{Var}(\ell_f)$} & {5.215}       & {4.376}           & {6.718}            & {14.416}          & {5.353}          \\
   {$\text{HRM}_{.3, .4}(\ell_f)$}      & {1.374}       & {1.336}           & {1.542}            & {2.214}           & {1.382}          \\
   {$\text{HRM}_{.2, .8}(\ell_f)$}      & {1.233}       & {1.239}           & {1.344}            & {1.845}           & {1.225}          \\ \bottomrule
   \end{tabular}
   \caption{Risks for different ImageNet classification models evaluated on the validation set. $\ell_f(Z)$ is the cross-entropy loss for each model $f$. For simplicity, we omitted the arguments $Z$ in the table. $\CVaR_\alpha$ is the expected value of the top $100\alpha$ percent losses. %
   $\text{HRM}_{a,b}$ is the cumulative prospect theory risk defined in~\citet{liu2019human}. All results are rounded to $3$ digits.}
   \label{tbl:risk-assessments} 
   \end{table*}

\subsection{Empirical Distortion Risk Minimization}
\label{sec:expt-edrm}

To illustrate the difference among models
learned under different risk objectives,
we first present a toy example 
for comparing models learned under the expected loss objective  
and the CVaR objective respectively.  
We then show the efficacy of our proposed
optimization procedure through  
training deep neural networks on CIFAR-10.
In both cases, 
the models are learned by following~\eqref{eq:gd-step}. 
For more details on these experiments, 
we refer the readers to Appendix~\ref{appendix:experiment}.

\paragraph{Toy Example} In the toy example, 
we work with a binary classification task 
where the covariates are $2$-dimensional. 
In Figure~\ref{fig:2d_illustration}, 
the blue pluses and orange dots represent two classes, 
respectively. 
We have learned logistic regression models 
to minimize the expected loss 
and the $\CVaR_{.05}$ (expected value of the top $5\%$ losses) through minimizing their empirical risks. 
The loss distribution 
along with 
the prediction contours of the two classifiers
showcase the difference between the two models. 
In particular, 
the model learned under expected loss 
suffers high loss for a small subset of the covariates
while 
the model learned under $\CVaR_{.05}$ have 
all losses concentrated around a small value.
Indicated by the (uniform) grey color in the contour plot,
the predictions (predicted probability of a covariate being labeled as $1$) 
for the $\CVaR_{.05}$ model are around $0.5$. 
In contrast, 
the predictions for the expected loss model spread 
across a wide range 
between $0$ and $1$.

\begin{figure*}
    \centering
    \includegraphics[width=\linewidth]{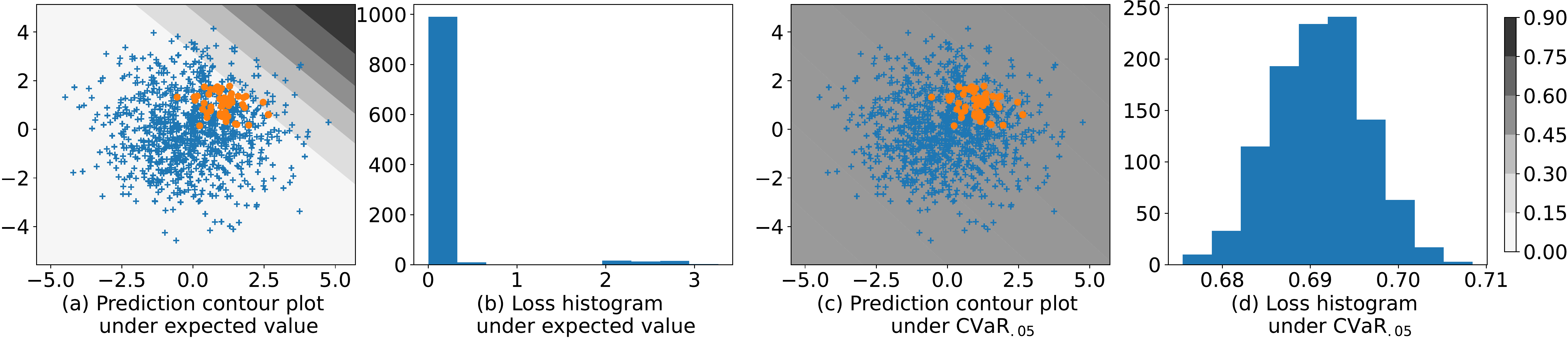}
    \caption{Prediction (predicted probability of a covariate being labeled as $1$) contours and loss histograms of two models learned under the expected loss and the $\CVaR_{.05}$  objective, respectively. 
    The blue pluses and orange dots represent two classes. 
    The loss distribution for the expected loss model has extremely high values for a small subset of the covariates.
    }
    \label{fig:2d_illustration}
\end{figure*}

\begin{figure*}
    \centering
    \begin{subfigure}[b]{0.23\textwidth}
         \centering
         \includegraphics[width=\textwidth]{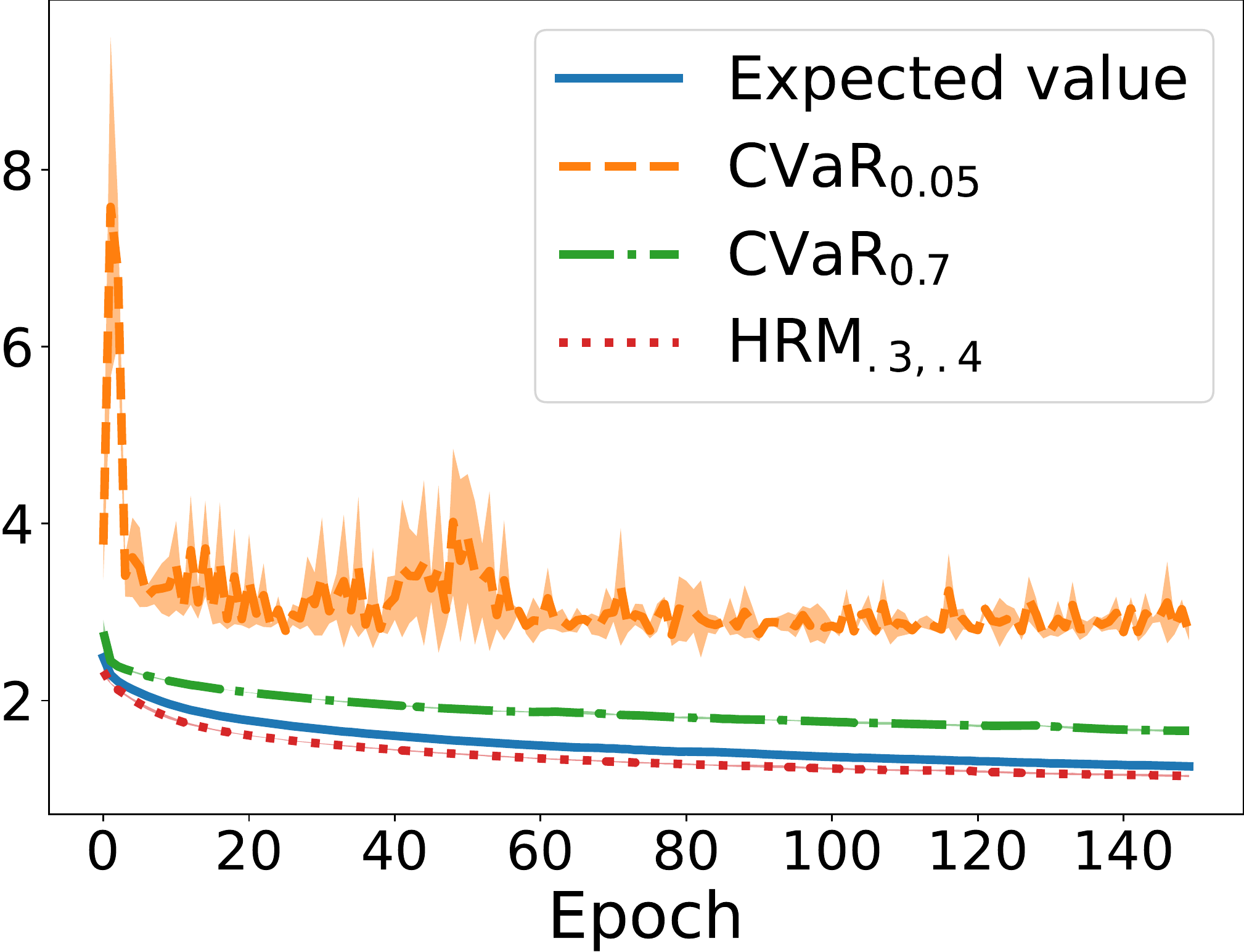}
         \caption{Training objective values}
         \label{fig:train_risk}
     \end{subfigure}
     \hfill
     \begin{subfigure}[b]{0.23\textwidth}
         \centering
         \includegraphics[width=\textwidth]{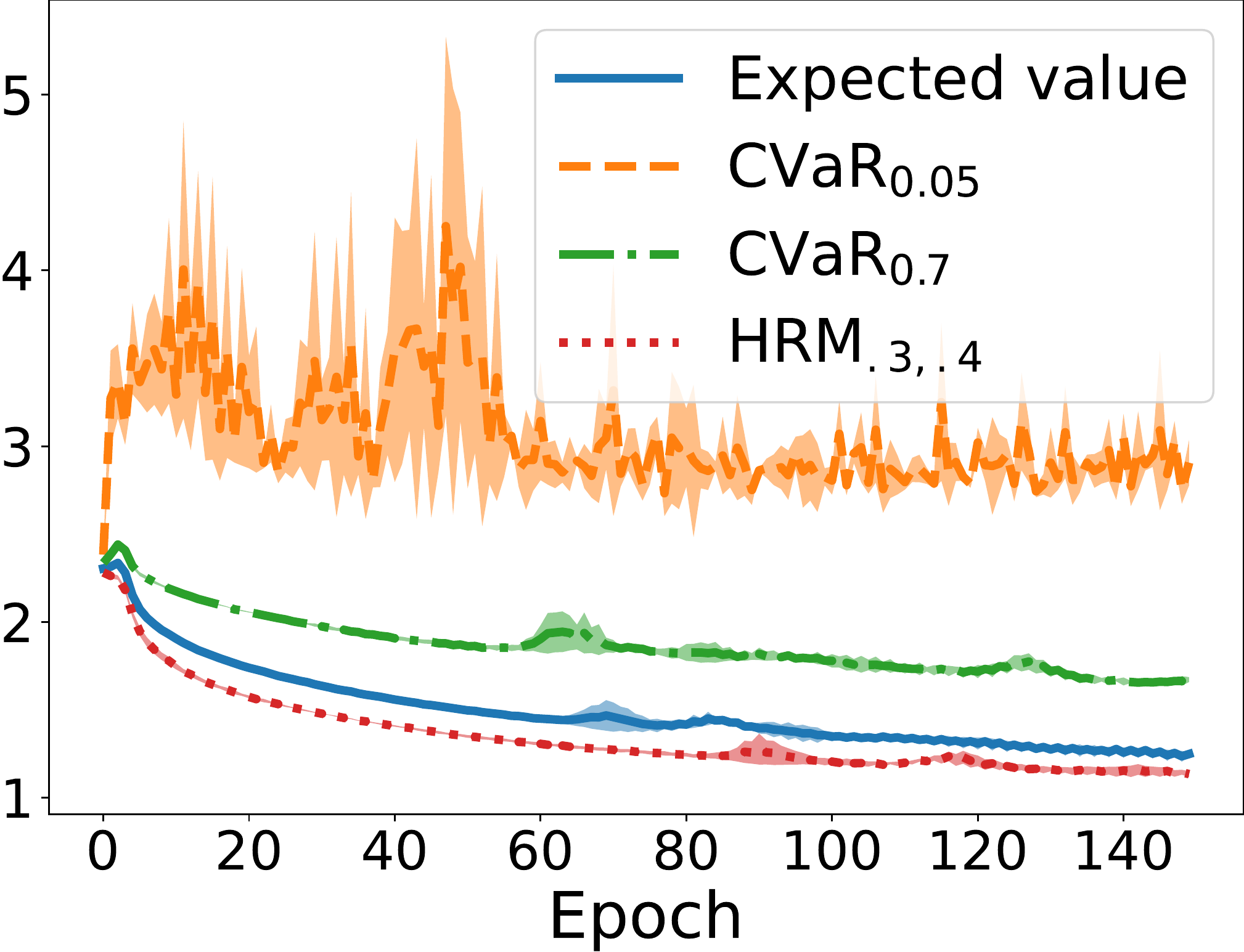}
         \caption{Testing objective values}
         \label{fig:test_risk}
     \end{subfigure}
     \hfill
     \begin{subfigure}[b]{0.23\textwidth}
         \centering
         \includegraphics[width=\textwidth]{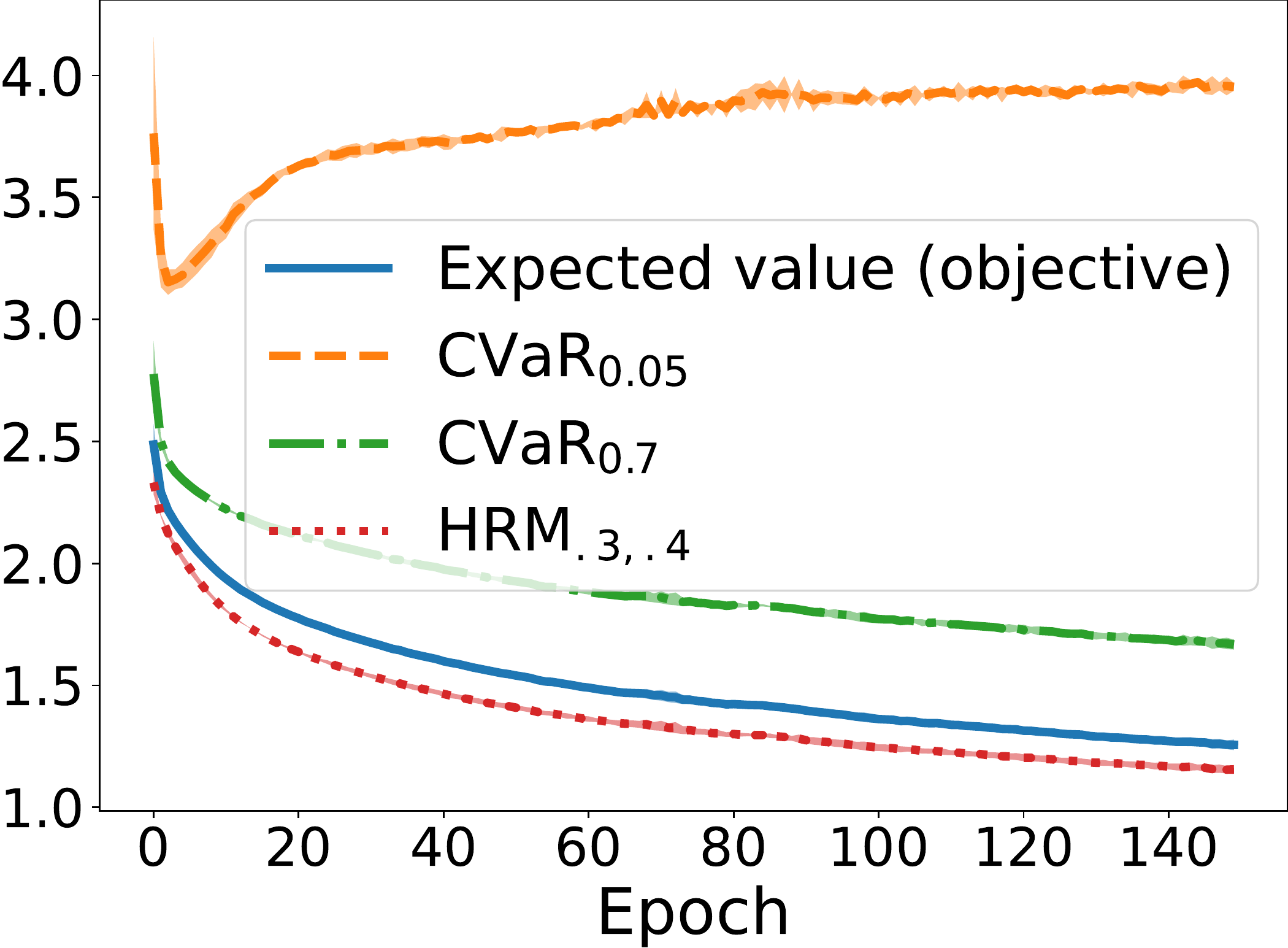}
         \caption{Training risk evaluations}
         \label{fig:ev_other_risks_train}
     \end{subfigure}
     \hfill
     \begin{subfigure}[b]{0.23\textwidth}
         \centering
         \includegraphics[width=\textwidth]{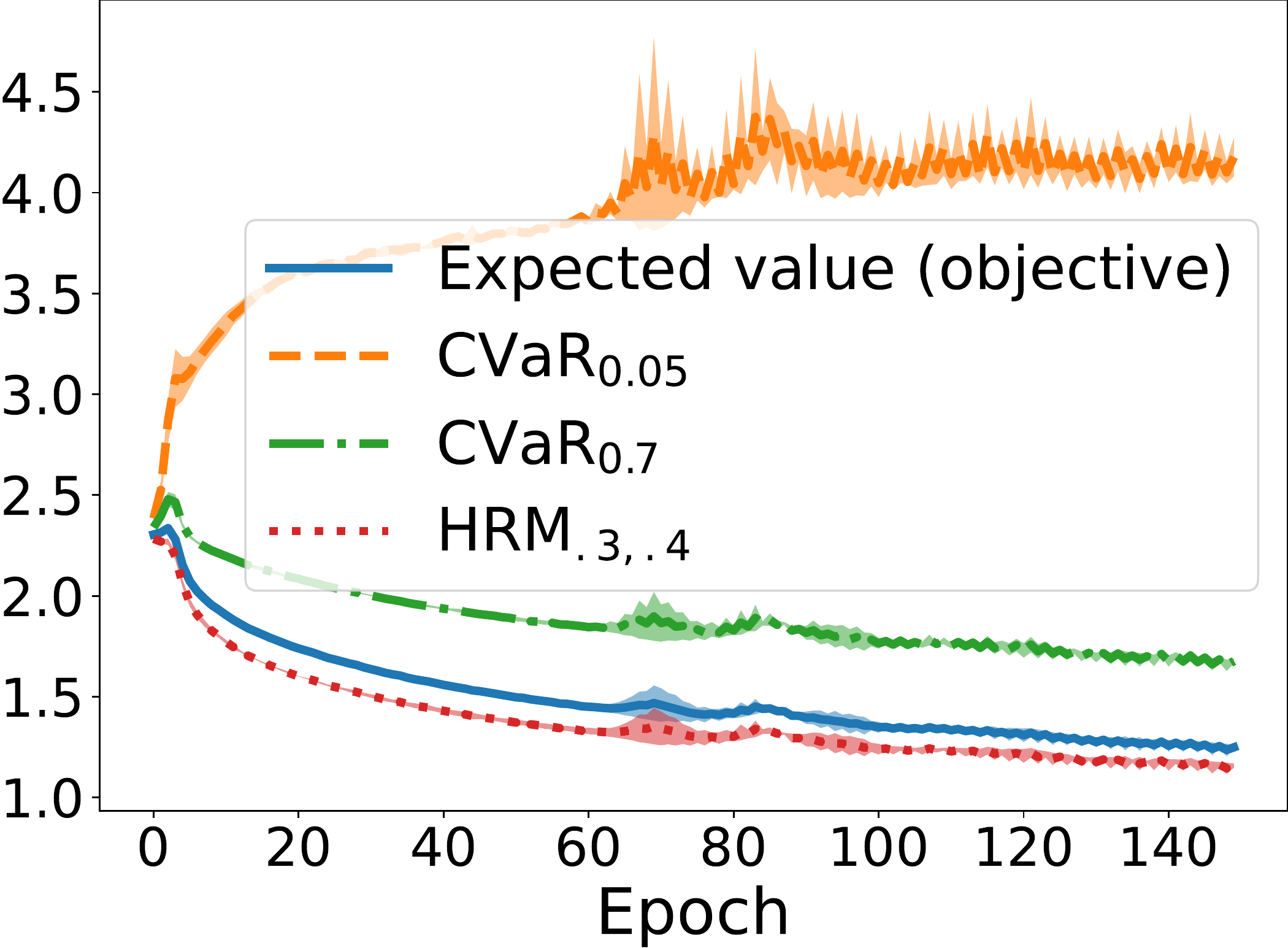}
         \caption{Testing risk evaluations}
         \label{fig:ev_other_risks_test}
     \end{subfigure}
    \caption{
    Performance of VGG-16 models 
    trained under expected loss, $\CVaR_{.05}$, $\CVaR_{.7}$
    and $\text{HRM}_{.3, .4}$.
    In Figures \ref{fig:train_risk} and \ref{fig:test_risk},
    each model is trained and evaluated on the same objective.
    In Figures \ref{fig:ev_other_risks_train} and \ref{fig:ev_other_risks_test},
    we only train one model under the expected loss but report all four objectives of that model.
    All results are averaged over $5$ runs. 
    }
    \label{fig:edrm_cifar10}
\end{figure*}

\paragraph{CIFAR-10}
We have trained VGG-16 models on CIFAR-10
through minimizing the empirical risks
for expected loss, $\CVaR_{.05}$, $\CVaR_{.7}$
and $\text{HRM}_{.3, .4}$~\citep{liu2019human}
using the gradient descent step presented in~\eqref{eq:gd-step}.
The models are trained 
over $150$ epochs
and the learning rate is chosen to be $0.005$. 
As shown in Figure~\ref{fig:train_risk} and \ref{fig:test_risk},
in general,  
the objective values %
are  decreasing 
over the epochs during training and testing.
In addition, we observe that minimizing 
the empirical risk for expected loss 
does not necessarily imply minimizing other risks, 
e.g., $\CVaR_{.05}$ (Figure~\ref{fig:ev_other_risks_train}).
These results suggest the efficacy of  
our proposed optimization procedure 
for minimizing distortion risks.

\vspace{-1em}
\section{Discussion}\label{sec:discussion}

We have presented a principled framework,
including analytic tools and algorithms for  risk-sensitive learning and assessment that:
(1) obtains the empirical CDF; 
(2) estimates the risks of interest through 
plugging in the empirical CDF;
and (3) minimizes the empirical risk (for risk-sensitive learning). 
Our theoretical results on the uniform convergence of the proposed risk estimators hold 
simultaneously over a hypothesis class (constrained by an appropriate complexity measure)
and over H\"older risks.
The key building block for these results is the uniform convergence of the CDF estimator.  

There are multiple future directions of our work. 
First, we hope to 
more precisely characterize 
the permutation complexity 
(under various hypothesis classes). 
Second, our gradient descent procedure~\eqref{eq:gd-step}
requires sorting all losses. 
An important next step would be 
to allow minibatches 
(sorting only a small subset of losses) 
when minimizing empirical distortion risks.
Third, as shown in Figure~\ref{fig:2d_illustration}, 
models learned under different risk objectives 
behave distinctly.
Characterizing these model behaviors theoretically and empirically,
and understanding the trade-offs among these objectives
is crucial for building future models.

\section*{Acknowledgements}
LL is generously supported by an Open Philanthropy AI Fellowship.

\bibliographystyle{plainnat}
\bibliography{risk_supervised_citations}

\clearpage

\appendix

\section{H\"older Risk Functionals}
\label{appendix:holder-risk-functionals}

In the definition of H\"older risk functionals, 
we require $d$ to be a quasi-metric,
which we provide the definition here. 

\begin{definition}
A function $d:\L_\infty(\Real, \Borel(\Real))\times \L_\infty(\Real, \Borel(\Real))\rightarrow [0,+\infty)$ 
is a quasi-metric if the following two conditions hold:
\begin{itemize}
    \item For all $F_U, F_{U'} \in \L_\infty(\Real, \Borel(\Real))$, $d(F_U, F_{U'}) = 0$ if and only if $F_U = F_{U'}$;
    \item For all $F_U, F_{U'},  F_{Z''}\in \L_\infty(\Real, \Borel(\Real))$, 
    $d(F_U, F_{Z''}) \leq d(F_U, F_{U'}) + d(F_{U'}, F_{Z''})$.
\end{itemize}
\end{definition}
If a quasimetric is symmetic, i.e., 
for all $F_U, F_{U'} \in \L_\infty(\Real, \Borel(\Real))$,
$d(F_U, F_{U'}) = d(F_{U'}, F_U)$, 
it is also a \emph{metric}. 
The set of quasi-metrics contains 
symmetric quasi-metics, 
e.g., sup norms $\L_\infty$, Wasserstein distance, 
along with %
non-symmetric quasi-metrics, e.g., Kullback-Leibler divergence.

We will now discuss 
why optimized certainty equivalent (OCE) risks (e.g., mean-variance, entropic risk, \CVaR)
and spectral risks with bounded spectrum (e.g., \CVaR, certain CPT-inspired Risks) are Lipschitz on bounded random variables.
OCE risks,  
first introduced by~\citet{ben1986expected}, are defined as 
\begin{align*}
    \rho_\oce(F_U) := \inf_{\lambda \in \Real}\left\{ \lambda 
    + \mathbb{E}[\phi(U - \lambda)] \right\},
\end{align*}
where 
$\phi:\Real \to \Real \cup \{+\infty\}$ is a nondecreasing, closed and convex function with $\phi(0)=0$
and $1 \in \partial \phi(0)$. 
To complement the risk-averse OCEs, 
a risk-seeking version (inverted OCE) is proposed: 
\begin{align*}
       \rho_\ioce(F_U) := \sup_{\lambda \in \Real}\left\{ \lambda 
    - \mathbb{E}[\phi(\lambda - U)] \right\}.
\end{align*}

\begin{restatable}{proposition}{propOCE}\label{prop:oce_lipschitz}
If $\phi$ is continuously differentiable,
then the OCE risks $\rho_\oce$ and inverted OCE risks $\rho_\ioce$ are Lipschitz 
on the space of bounded random variables with support $[0, D]$:
\begin{align*}
    |\rho_\oce(F_U) - \rho_\oce(F_{U'})|
    &\leq \max_{x \in [0,D]} (\phi(D-x) - \phi(-x))\|F_U - F_{U'}\|_\infty,\\
    |\rho_\ioce(F_U) - \rho_\ioce(F_{U'})|
    &\leq \max_{x \in [0,D]} (\phi(x-D) - \phi(x))\|F_U - F_{U'}\|_\infty. 
\end{align*}
\end{restatable}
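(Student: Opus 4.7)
The plan is to reduce the Lipschitz estimate to a pointwise-in-$\lambda$ bound on $|\E[\phi(U-\lambda)] - \E[\phi(U'-\lambda)]|$, combined with the standard inequalities $|\inf_\lambda f_U(\lambda) - \inf_\lambda f_{U'}(\lambda)| \leq \sup_\lambda |f_U(\lambda) - f_{U'}(\lambda)|$ and its supremum analog. Writing $f_U(\lambda) := \lambda + \E[\phi(U-\lambda)]$, the explicit $\lambda$ cancels in the difference $f_U - f_{U'}$, so only the expectation term needs to be controlled uniformly in $\lambda$.

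For the pointwise estimate I would integrate by parts, using that $U$ and $U'$ are supported in $[0,D]$ (so $F_U(0^-) = F_{U'}(0^-) = 0$ and $F_U(D) = F_{U'}(D) = 1$) together with the assumption that $\phi$ is continuously differentiable. This yields
\[
\E[\phi(U-\lambda)] - \E[\phi(U'-\lambda)] \;=\; -\int_0^D \phi'(u-\lambda)\bigl(F_U(u)-F_{U'}(u)\bigr)\,du,
\]
and since $\phi$ is nondecreasing we have $\phi' \geq 0$, so the absolute value of the right side is at most $\|F_U - F_{U'}\|_\infty \cdot \bigl(\phi(D-\lambda) - \phi(-\lambda)\bigr)$.

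The key remaining step, and the one I expect to be the main obstacle, is to justify that the supremum over $\lambda \in \Real$ may be replaced by a maximum over $\lambda \in [0,D]$, so that the constant matches the proposition statement. Convexity of $\phi$ together with the subgradient condition $1 \in \partial\phi(0)$ imply $\phi'(x) \leq 1$ for $x \leq 0$ and $\phi'(x) \geq 1$ for $x \geq 0$. Consequently, for $U$ supported in $[0,D]$ the derivative $f_U'(\lambda) = 1 - \E[\phi'(U-\lambda)]$ is nonpositive on $(-\infty, 0]$ and nonnegative on $[D, \infty)$, so the infimum of $f_U$ is attained on $[0,D]$; the same holds for $f_{U'}$. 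Combined with the pointwise bound, this reduces the problem to a maximum over $\lambda \in [0,D]$, which yields the stated constant for $\rho_\oce$. The inverted-OCE case is parallel: integrating by parts in $\E[\phi(\lambda-U)]$ produces an antiderivative contribution $\phi(\lambda)-\phi(\lambda-D)$ over $[0,D]$, and the analogous convexity/subgradient argument (with the roles of the two endpoints of $[0,D]$ interchanged) confines the supremum in $\rho_\ioce$ to $\lambda \in [0,D]$, from which the claimed constant follows.
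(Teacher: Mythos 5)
Your proposal is correct and follows essentially the same route as the paper: reduce to a pointwise-in-$\lambda$ comparison of $\lambda + \E[\phi(U-\lambda)]$ (the paper does this by comparing at the respective optimizers, which is the same as your $|\inf - \inf| \leq \sup|\cdot|$ inequality), then integrate by parts and use $\phi' \geq 0$ with the boundary values of the CDFs on $[0,D]$. The only difference is that the paper handles the restriction of the optimization to $\lambda \in [0,D]$ by citing Lemma~9 of \citet{lee2020learning}, whereas you derive it directly from convexity and $1 \in \partial\phi(0)$; also note the paper's stated bound for $\rho_\ioce$ has the sign of the constant flipped, and your derived constant $\phi(\lambda)-\phi(\lambda-D)$ is the intended (nonnegative) one.
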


\begin{remark}
Similar to~\citet[Lemma 4.1]{huang2021off}, Proposition~\ref{prop:oce_lipschitz} shows that the expected value and $\cvara$ are $D$- and $\frac{D}{\alpha}$-Lipschtiz on random variables with support $[0,D]$, respectively. 
In addition, this result also provides Lipschitzness of the entropic risks  (since the corresponding $\phi$ is continuously differentiable~\citep[Table 1]{lee2020learning})
and other OCE and inverted OCE risks that do not belong to distortion risk functionals.
\end{remark}
\begin{proof}%
When $U$ has support $[0,D]$, 
as shown in~\citet[Lemma 9]{lee2020learning}, 
we can re-write the OCE and inverted OCE risks as follows:
\begin{align*}
    \rho_\oce(F_U) &= \min_{\lambda \in [0,D]}\left\{ \lambda 
    + \mathbb{E}[\phi(U - \lambda)] \right\}\\
    \rho_\ioce(F_U) &= \max_{\lambda \in [0, D]}\left\{ \lambda 
    - \mathbb{E}[\phi(\lambda - U)] \right\}.
\end{align*}

For any $U, U' \in [0,D]$,
denote $\lambda_U \in \argmin_{\lambda \in [0,D]} \lambda + \mathbb{E}_{U}[\phi(U-\lambda)]$ and
$\lambda_{U'} \in \argmin_{\lambda \in [0,D]} \lambda + \mathbb{E}_{U'}[\phi(U'-\lambda)]$. 
Consider the case where $\rho_\oce(F_U) < \rho_\oce(F_{U'})$. 
\begin{align*}
    |\rho_\oce(F_U) - \rho_\oce(F_{U'})| &= \rho_\oce(F_{U'}) - \rho_\oce(F_U)\\ 
    &= \lambda_{U'} + \mathbb{E}_{U'}[\phi(U'-\lambda_{U'})]
    - \lambda_{U} - \mathbb{E}_{U}[\phi(U-\lambda_{U})]\\
    &\stackrel{(i)}{\leq} \lambda_U + \mathbb{E}_{U'}[\phi(U'-\lambda_U)]
    - \lambda_{U} - \mathbb{E}_{U}[\phi(U-\lambda_{U})]\\
    &= \int_0^D \phi(u-\lambda_U) d\left( F_{U'}(u) - F_U(u) \right)\\
    &\stackrel{(ii)}{=} \phi(u-\lambda_U) \left(F_{U'}(u) - F_U(u) \right) \Big|_{u=0}^{u=D} - \int_0^D \phi'(u-\lambda_U) \left( F_{U'}(u) - F_U(u)\right)du\\
    &\stackrel{(iii)}{\leq}
    \|F_U-F_{U'}\|_\infty \int_0^D \phi'(u-\lambda_U) du \\
    &=%
    \left(\phi(D - \lambda_U) - \phi(-\lambda_U) \right) \|F_U-F_{U'}\|_\infty,
\end{align*}
where $(i)$ comes from the definition of $\lambda_{U'}$,
$(ii)$ uses integration by parts, and
$(iii)$ uses the fact that $\phi$ is non-decreasing, i.e., 
$\phi'$ is non-negative,
and $F_U(0) = F_{U'}(0)=0, F_U(D) = F_{U'}(D) = 1$.
The case when $\rho_\oce(F_{U'}) < \rho_\oce(F_U)$ proceeds similarly:
\begin{align*}
    |\rho_\oce(F_U) - \rho_\oce(F_{U'})| &= \rho_\oce(F_U) - \rho_\oce(F_{U'})\\ 
    &\leq \int_0^D \phi(u-\lambda_{U'}) d\left( F_U(u) - dF_{U'}(u) \right)\\
    &= \phi(u-\lambda_{U'}) \left(F_U(u) - F_{U'}(u) \right) \Big|_{u=0}^{u=D} - \int_0^D \phi'(u-\lambda_{U'}) \left( F_U(u) - F_{U'}(u)\right)du\\
    &\leq %
    \left(\phi(D - \lambda_{U'}) - \phi(-\lambda_{U'}) \right) \|F_U-F_{U'}\|_\infty.
\end{align*}
Putting it together, we have that 
\begin{align*}
    |\rho_\oce(F_U) - \rho_\oce(F_{U'})|
    \leq \max_{\lambda \in [0,D]} (\phi(D-x) - \phi(-x))\|F_U - F_{U'}\|_\infty. 
\end{align*}
For inverted OCE risks, 
denote $\lambda_U \in \argmax_{\lambda \in [0,D]} \lambda + \mathbb{E}_{U}[\phi(\lambda-U)]$ and 
$\lambda_{U'} \in \argmax_{\lambda \in [0,D]} \lambda + \mathbb{E}_{U'}[\phi(\lambda-U)]$. 
The proof proceeds similarly by using the fact that 
$\rho_\ioce(F_{U'}) - \rho_\ioce(F_U) \leq \E_{U}[\phi(\lambda_{U'} -U)]- \E_{U'}[\phi(\lambda_{U'} -U')]$
and $\rho_\ioce(F_{U}) - \rho_\ioce(F_U') \leq \E_{U'}[\phi(\lambda_{U} -U')] - \E_{U}[\phi(\lambda_{U} -U)]$.
Following similar steps, we obtain that
\begin{align*}
    |\rho_\ioce(F_U) - \rho_\ioce(F_{U'})|
    \leq \max_{\lambda \in [0,D]} (\phi(x-D) - \phi(x))\|F_U - F_{U'}\|_\infty. 
\end{align*}
\end{proof}

Spectral risks (also known as $L$-risks or rank-weighted risks) 
are a subset of distortion risk functionals.
As noted in~\citet{bauerle2021minimizing},
a spectral risk can be written as a distortion risk (Equation~\eqref{eq:distortion-risk})
with the following distortion function: for $t \in [0,1]$, 
\begin{align*}
    g(t) = \int_0^t h(s) ds,
\end{align*}
where $h: [0,1] \to \mathbb{R}_+$ is the non-decreasing  spectrum function 
that integrates to $1$.  
Since $g$ is Lipschitz when $h$ is bounded (i.e., $g'(t) = h(t)$ for $t \in (0,1)$), 
spectral risks are Lipschitz on the space of bounded random variables
when their spectrum is bounded. %

Finally, 
examples of risk functionals that are H\"older 
but are not Lipschitz 
include distortion risks whose distortion functions 
are H\"older but not Lipschitz.

\clearpage

\section{Proof of Results in Section~\ref{Sec:SV}}\label{appendix:generalization_proofs}
We note that in the following proofs,
$Z$ is used to denote a generic random variable
and the loss function is denoted by 
$\ell: \X \times \Y \times \Y \rightarrow \Real$.
(The loss function presented in the main text is a special case of this.)
For a given $(X,Y, f(X))$, 
the loss is denoted by $\ell(X, Y, f(X))$.

    \subsection{Auxillary Lemmas}\label{appendix:auxillary}
The below two auxillary lemmas are mainly adaptations to  
the class note from Professor Roberto Imbuzeiro Oliveira~\citep{Oliveira_undated}.

\begin{lemma}\label{lemma:Roberto-max-trick-part-a}
Let $\GuncClass(1) := \mybrace{g(\cdot\; ; r):= \Ind_{\lbrace \cdot \; \leq r \rbrace}: \forall r \in \Real}$. 
For a fixed $f \in \FuncClass$, iid sample $\{X_i, Y_i\}_{i=1}^n$ with joint
probability measure $\Prob$, 
and a loss function $\ell:\X\times\Y\times\Y\rightarrow \Real$, 
we have that 
\begin{align}\label{eq:trick-part-a-1}
    &\sup_{g\in\GuncClass(1)} 
    \left|\sum_{i=1}^n \xi_{i} 
    g(\ell(X_{i},Y_{i},f(X_{i})))\right|=\max_{j \in [n]}
    \left|\sum_{i=1}^j \xi_{i}\right|,
\end{align}
and further 
\begin{align}\label{eq:trick-part-a-2}
    \E_{\Prob,\R}\left[\exp\left(\frac{\lambda}{n} 
    \sup_{g\in\GuncClass(1)} 
    \left|\sum_{i=1}^n\xi_i g(\ell(X_i,Y_i,f(X_i)))\right|\right)\right]
    \leq 2\E_{\R}\left[\max_{j \in [n]}
    \left( \exp\left(\frac{\lambda}{n}\sum_{i=1}^j\xi_i\right)
    \Ind_{\lbrace\sum_{i=1}^j \xi_i \geq 0\rbrace}\right)\right].
\end{align}
\end{lemma}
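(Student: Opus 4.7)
The plan is to prove both identities by exploiting two structural facts: (i) the supremum over threshold indicators only matters at finitely many breakpoints, namely the values of the losses themselves, and (ii) the Rademacher variables are iid with a symmetric $\pm 1$ law, so their joint distribution is invariant under both permutations and sign flips.

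For~\eqref{eq:trick-part-a-1}, I would first fix a realization of $\{(X_i,Y_i)\}_{i=1}^n$ and observe that as $r$ varies over $\Real$, the vector $(\Ind_{\{\ell(X_k,Y_k,f(X_k))\leq r\}})_{k\in[n]}$ takes only $n+1$ distinct values. Letting $\pi$ be a (data-dependent) permutation sorting the losses in non-decreasing order, these values correspond to $(\Ind_{\{k\leq j\}})_{k\in[n]}$ read along the sorted indices, for $j\in\{0,1,\ldots,n\}$. Hence $\sup_{g\in\GuncClass(1)}\bigl|\sum_i\xi_i\,g(\ell(X_i,Y_i,f(X_i)))\bigr| = \max_{j\in\{0,\ldots,n\}}\bigl|\sum_{k=1}^{j}\xi_{\pi(k)}\bigr|$. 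The $j=0$ term is zero and can be dropped; because $(\xi_i)$ are iid and independent of the data, exchangeability gives that the conditional joint distribution of $(\xi_{\pi(1)},\ldots,\xi_{\pi(n)})$ given the data equals that of $(\xi_1,\ldots,\xi_n)$, so~\eqref{eq:trick-part-a-1} holds in distribution, which is all that is needed for what follows.

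For~\eqref{eq:trick-part-a-2}, I would apply $x\mapsto\exp(\lambda x/n)$ to both sides and use monotonicity to bring the exponential inside the $\max$. Writing $S_j:=\sum_{i=1}^{j}\xi_i$, one has the pointwise identity $\exp(\lambda|S_j|/n) = \exp(\lambda S_j/n)\Ind_{\{S_j\geq 0\}} + \exp(-\lambda S_j/n)\Ind_{\{S_j<0\}}$, since exactly one indicator is nonzero. Bounding $\max_j$ of a sum by a sum of $\max_j$'s and taking expectation, the sign-flip symmetry $(\xi_i)\stackrel{d}{=}(-\xi_i)$ shows that $\E_{\R}[\max_j \exp(-\lambda S_j/n)\Ind_{\{S_j<0\}}] = \E_{\R}[\max_j\exp(\lambda S_j/n)\Ind_{\{S_j>0\}}]$, which is trivially at most $\E_{\R}[\max_j\exp(\lambda S_j/n)\Ind_{\{S_j\geq 0\}}]$. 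Combining the two terms produces the factor of $2$ and yields~\eqref{eq:trick-part-a-2}.

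The main subtlety, rather than any hard calculation, will be keeping the two layers of randomness straight: the sorting permutation $\pi$ is a function of the data, so the distributional equality in the first step relies on conditioning on the data and invoking independence and exchangeability of $(\xi_i)$; similarly, the sign-flip argument in the second step must be applied pointwise in the data so that the $\max_j$ and the event $\{S_j\geq 0\}$ transform consistently before the outer expectation over $\Prob$ is taken.
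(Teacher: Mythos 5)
Your proposal is correct and follows essentially the same route as the paper's proof: sort the losses so the supremum over threshold indicators collapses to a maximum of partial Rademacher sums, use exchangeability of $(\xi_i)$ conditionally on the data to drop the sorting permutation, then split $\exp(\lambda|S_j|/n)$ into its positive- and negative-sign parts and use the sign-flip symmetry of the $\xi_i$ to obtain the factor of $2$. If anything, you are slightly more careful than the paper on two points—recognizing that \eqref{eq:trick-part-a-1} is really an equality after re-indexing (i.e., in conditional distribution), and that the max-of-a-sum step is an inequality rather than an equality—neither of which affects the final bound.
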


\begin{remark}
We note that an important property of Lemma~\ref{lemma:Roberto-max-trick-part-a}
is that the bound is independent of the samples $\lbrace X_i, Y_i\rbrace_{i=1}^n$.
\end{remark}

\begin{proof}
    Let $\{R_i\}_{i=1}^n$ denote the sorted sequence of 
    $\{\ell(X_i,Y_i,f(X_i))\}_{i=1}^n$, where $R_1\leq R_2 \ldots \leq R_n$. Using $\{R_i\}_{i=1}^n$, we have 

    \begin{align*}%
        \E_{\Prob,\R}\left[\exp\left(\frac{\lambda}{n} \sup_{g\in\GuncClass(1)} 
        \left|\sum_{i=1}^n\xi_i g(\ell(X_i,Y_i,f(X_i)))\right|\right)\right]
        = \E_{\Prob,\R}\left[\exp\left(\frac{\lambda}{n} 
        \sup_{g\in\GuncClass(1)} \left|\sum_{i=1}^n\xi_i g(R_i)\right|\right)\right]
    \end{align*}

    Consider a function $g(t; r) = \Ind_{\lbrace t \leq r\rbrace}$. 
    For such a function, $\sum_{i=1}^n \xi_i g(R_i; r)$ is equal to 
    \begin{itemize}
        \item $0$ if $r<\min_{i \in [n]} R_i $,
        \item $\sum_{i=1}^j\xi_i$ when $R_j\leq r <R_{j+1}$ for some $j\in\lbrace1,\ldots,n-1\rbrace$,
        \item $\sum_{i=1}^n\xi_i$ otherwise. 
    \end{itemize}
    
Therefore, we have that 
\begin{align*}%
    &\sup_{g\in\GuncClass(1)} 
    \left|\sum_{i=1}^n\xi_{i} g(\ell(X_{i},Y_{i},
    f(X_{i})))\right|
    =\max_{j\in[n]}\left|\sum_{i=1}^j\xi_{i}\right|.
\end{align*}

Finally, we notice that 
\begin{align}\label{eq:remove-absoluate-value-xi}%
    &\E_{\Prob,\R}\left[\sup_{g\in\GuncClass(1)}\exp\left(\frac{\lambda}{n}\left|
    \sum_{i=1}^n \xi_ig(R_i)\right|\right) \right] 
    =\E_{\Prob,\R}\left[ \max_{j \in [n]} 
    \exp\left(\frac{\lambda}{n}\left|\sum_{i=1}^j\xi_i\right|\right)\right]\nonumber \nonumber \\
    =& \E_{\Prob,\R}\Big[\max_{j \in [n]} \Big( \exp\left(\frac{\lambda}{n}\sum_{i=1}^j\xi_i\right)
    \Ind_{\lbrace\sum_{i=1}^j\xi_i\geq0\rbrace} 
    + \exp\left(-\frac{\lambda}{n}\sum{i=1}^j\xi_i\right)\Ind_{\lbrace\sum_{i=1}^j\xi_i<0\rbrace}\Big)\Big]\nonumber\\
    =& \E_{\Prob,\R}\left[\max_{j \in [n]}\left( \exp\left(\frac{\lambda}{n}\sum_{i=1}^j\xi_i\right)\Ind_{\lbrace\sum_{i=1}^j\xi_i\geq0\rbrace}\right)\right]
    +\E_{\Prob,\R}\left[\max_{j \in [n]}\left(\exp\left(-\frac{\lambda}{n}\sum_{i=1}^j\xi_i\right)\Ind_{\lbrace\sum_{i=1}^j\xi_i<0\rbrace}\right)\right]\nonumber\\
    \leq&2\E_{\Prob,\R}\left[\max_{j \in [n]}\left( \exp\left(\frac{\lambda}{n}\sum_{i=1}^j\xi_i\right)\Ind_{\lbrace\sum_{i=1}^j\xi_i\geq0\rbrace}\right)\right].
\end{align}

\end{proof}

\begin{lemma}%
\label{lemma:Roberto-max-trick-part-b} %
Let $\{Z_i\}_{i=1}^n$ taking values in $\mathcal{Z}$
denote independent samples drawn from $\Prob$ and $\weight: \mathcal{Z} \to \Real_+$.
For $n$ independent Rademacher random variables $\{\xi_i\}_{i=1}^n$, 
we have that
for all $\lambda \geq 0$, 
\begin{align*}
    \E_{\Prob, \R}\left[ 
            \exp\left(\max_{j \in [n]} \frac{\lambda}{n}\sum_{i=1}^j w(Z_i)\xi_i
        \right)
    \right]
    \leq 2 \E_{\Prob, \R}\left[
        \exp\left(\frac{\lambda}{n}\sum_{i=1}^n \weight(Z_i)\xi_i
        \right)\right].
\end{align*}
Further, if $\frac{1}{n}\sum_{i=1}^n \weight(Z_i) \xi_i$ is mean zero $\frac{\gamma^2}{n}$-subGaussian, then 
\begin{align*}
    \E_{\R}\left[ 
            \exp\left(\max_{j \in [n]} \frac{\lambda}{n}\sum_i^j\xi_i
        \right)
    \right]
    \leq 2\exp\left(\frac{\lambda^2 \gamma^2}{2n}\right).
\end{align*}
\end{lemma}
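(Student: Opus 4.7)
The plan is to condition on $Z_1, \ldots, Z_n$ throughout and reduce the problem to a statement about partial sums of independent symmetric random variables. Write $X_i := w(Z_i)\xi_i$; conditional on $Z$, these are independent and symmetric around zero, and the quantities $S_j := \sum_{i=1}^j X_i$ and $M_n := \max_{j\in[n]} S_j \geq 0$ behave like a Rademacher-type random walk. The key tool I would invoke is L\'evy's inequality for independent symmetric summands, namely $\Prob(M_n \geq t \mid Z) \leq 2\,\Prob(S_n \geq t \mid Z)$ for every $t \geq 0$. This follows by considering the stopping time $\tau := \min\{j : S_j \geq t\}$ and using that, on $\{\tau \leq n\}$, the remainder $S_n - S_\tau$ is symmetric and independent of $\F_\tau$, hence non-negative with conditional probability at least $1/2$.

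Next, I would convert this tail bound into an MGF bound via the layer-cake identity. Since $M_n \geq 0$,
\begin{align*}
\E[e^{\lambda M_n/n}\mid Z]
&= 1 + \int_0^\infty \tfrac{\lambda}{n}\,e^{\lambda t/n}\,\Prob(M_n \geq t \mid Z)\, dt\\
&\leq 1 + 2\int_0^\infty \tfrac{\lambda}{n}\,e^{\lambda t/n}\,\Prob(S_n \geq t\mid Z)\, dt\\
&= 2\,\E[e^{\lambda S_n^+/n}\mid Z] - 1,
\end{align*}
where $S_n^+ := \max(S_n, 0)$. Symmetry of $S_n$ conditional on $Z$ gives $\Prob(S_n < 0\mid Z) \leq 1/2$, and the decomposition $\E[e^{\lambda S_n^+/n}\mid Z] - \E[e^{\lambda S_n/n}\mid Z] = \Prob(S_n<0\mid Z) - \E[e^{\lambda S_n/n}\Ind(S_n<0)\mid Z] \leq \tfrac{1}{2}$ absorbs exactly the ``$-1$'' from the layer-cake identity. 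Taking expectation over $Z$ delivers the first claim, and the second statement is then immediate: one plugs the sub-Gaussian hypothesis $\E[e^{\lambda S_n/n}] \leq \exp(\lambda^2\gamma^2/(2n))$ into the first bound.

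The main obstacle I anticipate is keeping the tight constant $2$. Generic submartingale maximal inequalities applied to the non-negative submartingale $(e^{\lambda S_j/n})_j$ deliver worse constants --- for instance, Doob's $L^2$ inequality applied to $(e^{\lambda S_j/(2n)})^2$ only yields a factor of $4$. The symmetry of the Rademacher increments is therefore essential: it is what lets L\'evy's inequality save a factor of two on the tail, and what makes the comparison $\E[e^{\lambda S_n^+/n}] \leq \E[e^{\lambda S_n/n}] + 1/2$ work, so that the ``$-1$'' from the layer-cake identity exactly cancels the slack introduced when replacing $S_n^+$ by $S_n$.
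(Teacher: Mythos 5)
Your proposal is correct and follows essentially the same route as the paper's proof: the first-passage/stopping-time argument you describe for L\'evy's inequality is exactly the paper's Step~1 (its events $E_j$ are your $\{\tau=j\}$), and the layer-cake conversion with the symmetry bound $\Prob(S_n\geq 0)\geq \tfrac12$ absorbing the ``$-1$'' is the paper's Step~2. The only nitpick is that $\max_{j\in[n]}S_j$ need not be nonnegative, so your layer-cake ``identity'' should be stated as an inequality $\E[e^{\lambda M_n/n}]\leq 1+\int_0^\infty \tfrac{\lambda}{n}e^{\lambda t/n}\Prob(M_n\geq t)\,dt$, which only helps and does not affect the argument.
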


\begin{remark}\label{remark:roberto-trick}
We note that an immediate consequence of Lemma~\ref{lemma:Roberto-max-trick-part-b} is  
\begin{align*}
    &\E_{\Prob, \R}\left[ \max_{j \in [n]}
         \left(  \exp\left( \frac{\lambda}{n}\sum_{i=1}^j w(Z_i)\xi_i
        \right)
    \Ind_{ \lbrace  \sum_{i=1}^j w(Z_i)\xi_i\geq 0 \rbrace}\right) \right] \\
    \leq& \E_{\Prob, \R}\left[ 
            \exp\left(\max_{j \in [n]} \frac{\lambda}{n}\sum_{i=1}^j w(Z_i)\xi_i
        \right)\right]
    \leq 2 \E_{\Prob, \R}\left[
        \exp\left(\frac{\lambda}{n}\sum_{i=1}^n \weight(Z_i)\xi_i
        \right)\right].
\end{align*}
\end{remark}

\begin{proof}
The proof contains two main steps:\\
\textbf{Step 1:}
We will show that for all $t > 0$, 
\begin{align}\label{eq:step-1}
    \Prob \left( \max_{j \in [n]} \sum_{i=1}^j w(Z_i) \xi_i \geq t \right) \leq 
    2 \Prob\left(\sum_{i=1}^n w(Z_i) \xi_i \geq t \right).
\end{align}
To show~\eqref{eq:step-1}, for $t >0$, 
consider events  $E_0:=\emptyset$ and $E_j:=\lbrace\sum_{i=1}^j\weight(Z_i) \xi_i \geq t,
\sum_{i=1}^l \weight(Z_i) \xi_i< t, \forall l< j\rbrace$ for $j \in [n]$, 
which states that $j$ is the first index such that 
the partial sum $\sum_{i=1}^j\weight(Z_i) \xi_i$ is at least $t$.  
We first notice that 
\begin{align*}
    \mybrace{\max_{j}\sum_{i=1}^j\weight(Z_i)\xi_i\geq t} \subset\bigcup_{j=0}^n E_j.
\end{align*}
Since $E_j$ and $\sum_{i=j+1}^n w(Z_i) \xi_i \geq 0$ 
implies that $\sum_{i=1}^n w(Z_i)\xi_i \geq t$, we obtain 
\begin{align}\label{eq:step-1-key}
 \bigcup_{j=0}^n \left(E_j\bigcap \mybrace{\sum_{i=j+1}^n \weight(Z_i)\xi_i
 \geq 0}\right) \subset \mybrace{\sum_{i=1}^n \weight(Z_i)\xi_i\geq t}.
\end{align}
Moreover, for any $j \in [n]$, we have  
\begin{align}\label{eq:step-1-c}
    \Prob\left(\sum_{i=j+1}^n \weight(Z_i)\xi_i\geq 0\right)\geq \frac{1}{2},
\end{align}
since $\sum_{i=j+1}^n \weight(Z_i)\xi_i$ is symmetic around $0$ 
(i.e., $\sum_{i=j+1}^n \weight(Z_i)\xi_i \stackrel{d}{=} - \sum_{i=j+1}^n \weight(Z_i)\xi_i $)
for all $i \in \{0, \ldots, n\}$.
For any $j \in [n]$, since the event  $E_j$ (dependeing on $\{w(Z_i), \xi_i\}_{i\leq j}$) is independent of 
$\lbrace \sum_{i=j+1}^n \weight(Z_i)\xi_i \geq 0 \rbrace$, 
we have 
\begin{align*}
\Prob\left(E_j\bigcap \mybrace{\sum_{i=j+1}^n \weight(Z_i) \xi_i\geq 0}\right) 
=\Prob\left(E_j\right)\Prob\left(\sum_{i=j+1}^n
\weight(Z_i) \xi_i\geq 0\right) 
\geq \frac{\Prob\left(E_j\right)}{2}. 
\end{align*} 
As a result, we have 
\begin{align*}
    \Prob \left(\sum_{i=1}^n \weight(Z_i)\xi_i\geq t\right)
    &\geq \Prob \left( \bigcup_{j=0}^n 
    \left(E_j\bigcap \mybrace{\sum_{i=j+1}^n\weight(Z_i)\xi_i\geq 0}\right)\right)\\
    &=\sum_{j=0}^n \Prob \left( E_j\bigcap \mybrace{\sum_{i=j+1}^n\weight(Z_i)\xi_i\geq 0} \right)\\
    &\geq \sum_{j=1}^n \frac{\Prob\left(E_j\right)}{2} \\
    &\geq \frac{1}{2}\Prob\left(\max_{j \in [n]}\sum_{i=1}^j\weight(Z_i)\xi_i\geq t\right),
\end{align*}
where the first equality holds because for $i \neq j$, $E_i \cap E_j = \emptyset$, 
the first inequality follows from~\eqref{eq:step-1-key}
and the last inequality comes from the union bound.

\textbf{Step 2:}
For any differentiable $f$ and random variable $X$, 
\begin{align}\label{eq:step-2}
    \E[f(X) \Ind_{\lbrace X \geq 0\rbrace}]
    &= \E\left[ \left( f(0) + \int_0^X f'(t) dt\right)\right]\nonumber\\
    &= f(0) \E[\Ind_{\lbrace X \geq 0\rbrace}] 
        + \E\left[\int_0^\infty f'(t) \Ind_{\lbrace X \geq t \rbrace} dt\right]\nonumber\\
    &= f(0) \Prob( X \geq 0 ) + \int_0^\infty f'(t) \Prob(X \geq t) dt,
\end{align}
where the last equality follows from Fubini's theorem. 
Putting it altogether, we have 
\begin{align*}
    &\E_{\Prob, \R}\left[ 
            \exp\left(\max_{j \in [n]} \frac{\lambda}{n}\sum_{i=1}^j w(Z_i)\xi_i
        \right)\right]\\
=&1 
+ \lambda \int_0^\infty \exp\left(\lambda t \right) 
\Prob\left(  \max_{j \in [n]} \frac{1}{n} \sum_{i=1}^j w(Z_i) \xi_i \geq t\right) dt\\
\leq&1
+ 2\lambda \int_0^\infty \exp\left(\lambda t \right) 
\Prob\left( \frac{1}{n} \sum_{i=1}^n w(Z_i) \xi_i \geq t\right) dt\\
=&1
+ 2 \E_{\Prob, \R} \left[\left( \exp\left(\frac{\lambda}{n} \sum_{i=1}^n w(Z_i) \xi_i \right) - 1\right)
\Ind_{\lbrace \sum_{i=1}^n w(Z_i)\xi_i\geq 0 \rbrace}
\right]\\
=& 1
+ 2 \E_{\Prob, \R} \left[ \exp\left(\frac{\lambda}{n} \sum_{i=1}^n w(Z_i) \xi_i \right) 
\right] - 2 \Prob\left( \sum_{i=1}^n w(Z_i)\xi_i\geq 0 \right)\\
\leq& 2 \E_{\Prob, \R} \left[ \exp\left(\frac{\lambda}{n} \sum_{i=1}^n w(Z_i) \xi_i \right) 
\right],
\end{align*}
where 
the first inequality follows from~\eqref{eq:step-1}, 
the second equality uses~\eqref{eq:step-2} with $f(t) = e^{\lambda t}$
and $X = \max_{j \in [n]} \frac{1}{n} \sum_{i=1}^j w(Z_i) \xi_i$,
and the last inequality follows from~\eqref{eq:step-1-c}.  
Finally, if $\frac{1}{n}\sum_{i=1}^n \weight(Z_i) \xi_i$ is mean zero $\frac{\gamma^2}{n}$-subGaussian, then
using the definition of a subGaussian random variable, we obtain  
\begin{align*}
    2 \E_{\Prob, \R} \left[ \exp\left(\frac{\lambda}{n} \sum_{i=1}^n w(Z_i) \xi_i \right) 
\right] 
    \leq  2\exp\left(\frac{\lambda^2 \gamma^2}{2n}\right).
\end{align*}
\end{proof}

    \subsection{Proof of Theorem~\ref{thm:SV_Gen}}

\thmSvGen*

\begin{proof}\label{proof:Thm_SV_Gen} 

We first analyze the sensitivity of the sup-norm of the \CDF estimator over 
$\FuncClass$ and $\GuncClass(1)$. For a given two sets $\lbrace x_i,y_i\rbrace_{i=1}^n$ and $\lbrace x'_i,y'_i\rbrace_{i=1}^n$, which just differ in $j$'th entry, let

\begin{equation*}
    \wh{F}_1(r;f) := \frac{1}{n}\sum_{i=1}^n \mathbbm{1}_{\{\ell(y_i,f(x_i)) \leq r\}}~\text{and},~ \wh{F}_2(r;f) := \frac{1}{n}\sum_{i=1}^n \mathbbm{1}_{\{\ell(y'_i,f(x'_i)) \leq r\}}
\end{equation*}

Then, if $\sup_{f\in\FuncClass}\sup_{r\in\Real} \left|\wh{ F}_1(r;f) -  F(r;f)\right|\geq\sup_{f\in\FuncClass}\sup_{r\in\Real} \left|\wh{ F}_2(r;f) -  F(r;f)\right|$, we have 

\begin{align*}
    &\sup_{f\in\FuncClass}\sup_{r\in\Real} \left|\wh{ F}_1(r;f) -  F(r;f)\right|-\sup_{f\in\FuncClass}\sup_{r\in\Real} \left|\wh{ F}_2(r;f) -  F(r;f)\right|\\
    &=\sup_{f\in\FuncClass}\sup_{r\in\Real} \left|\frac{1}{n}\sum_{i=1}^n \mathbbm{1}_{\{\ell(X_i,y_i,f(x_i)) \leq r\}} -  F(r;f)\right|-\sup_{f\in\FuncClass}\sup_{r\in\Real} \left|\frac{1}{n}\sum_{i=1}^n \mathbbm{1}_{\{\ell(x_i,y'_i,f(x'_i)) \leq r\}} -  F(r;f)\right|\\
    &=\sup_{f\in\FuncClass}\sup_{r\in\Real} \left|\frac{1}{n}\sum_{i=1}^n \mathbbm{1}_{\{\ell(x'_i,y'_i,f(x'_i)) \leq r\}} +\frac{1}{n}\left( \mathbbm{1}_{\{\ell(x_j,y_j,f(x_j)) \leq r\}}-\mathbbm{1}_{\{\ell(x'_j,y'_j,f(x'_j)) \leq r\}}\right) -  F(r;f)\right|\\
    &\quad\quad\quad\quad\quad\quad\quad\quad\quad\quad\quad\quad\quad\quad\quad\quad\quad\quad\quad\quad-\sup_{f\in\FuncClass}\sup_{r\in\Real} \left|\frac{1}{n}\sum_{i=1}^n \mathbbm{1}_{\{\ell(x'_i,y'_i,f(x'_i)) \leq r\}} -  F(r;f)\right|\\
    &=\sup_{f\in\FuncClass}\sup_{r\in\Real} \frac{1}{n}\left| \mathbbm{1}_{\{\ell(x_j,y_j,f(x_j)) \leq r\}}-\mathbbm{1}_{\{\ell(x'_j,y'_j,f(x'_j)) \leq r\}} \right|\leq \frac{1}{n}.
\end{align*}
\begin{align*}
    &\left|\sup_{f\in\FuncClass}\sup_{r\in\Real} \left|\wh{ F}_1(r;f) -  F(r;f)\right|-\sup_{f\in\FuncClass}\sup_{r\in\Real} \left|\wh{ F}_2(r;f) -  F(r;f)\right|\right|\\
    &\quad\quad\quad\leq \sup_{f\in\FuncClass}\sup_{r\in\Real} \frac{1}{n}\left| \mathbbm{1}_{\{\ell(x_j,y_j,f(x_j)) \leq r\}}-\mathbbm{1}_{\{\ell(x'_j,y'_j,f(x'_j)) \leq r\}} \right|=\frac{1}{n}
\end{align*}
This bound holds no matter what $j$ and what data set we choose.  
Using bounded difference inequality, i.e.,  McDiarmid's inequality ~\citep{boucheron2013concentration}, we have, 

\begin{align}\label{eq:McDiarmid}
    \Prob\left(\sup_{f\in\FuncClass}\sup_{r\in\Real} \left|\wh{ F}(r;f) -  F(r;f)\right|-
    \E_\Prob\left[\sup_{f\in\FuncClass}\sup_{r\in\Real} \left|\wh{ F}(r;f) -  F(r;f)\right|\right]
\leq \sqrt{\frac{\log(\frac{1}{\delta})}{2n}}\right)
\end{align}
with probability at least $1-\delta$. Using a ghost sample set $\lbrace X'_i,Y'_i\rbrace_{i=1}^n$ we have 
\allowdisplaybreaks
\begin{align}\label{eq:supsup}
    &\E_\Prob\left[\sup_{f\in\FuncClass}\sup_{r\in\Real} \left|\wh{ F}(r;f) -  F(r;f)\right|\right]\nonumber\\
    =&\E_\Prob\left[\sup_{f\in\FuncClass}\sup_{r\in\Real} \left|\frac{1}{n}\sum_{i=1}^n \mathbbm{1}_{\{\ell(X_i,Y_i,f(X_i)) \leq r\}}  -  
    \E_\Prob\left[\frac{1}{n}\sum_{i=1}^n \mathbbm{1}_{\{\ell(X'_i,Y'_i,f(X'_i)) \leq r\}}\right]
    \right|\right]\nonumber\\
    =&\E_\Prob\left[\sup_{f\in\FuncClass}\sup_{g\in\GuncClass(1)} \left|\frac{1}{n}\sum_{i=1}^n g(\ell(X_i,Y_i,f(X_i)))  -  
    \E_\Prob\left[\frac{1}{n}\sum_{i=1}^n g(\ell(X'_i,Y'_i,f(X'_i)))\right]
    \right|\right]\nonumber\\
    =&\E_\Prob\left[\sup_{f\in\FuncClass}\sup_{g\in\GuncClass(1)} \left|\frac{1}{n}\E_\Prob\left[\sum_{i=1}^n g(\ell(X_i,Y_i,f(X_i)))  -  
    \sum_{i=1}^n g(\ell(X'_i,Y'_i,f(X'_i)))\Big|\sigma\left(\lbrace X_i,Y_i\rbrace_{i=1}^n\right)\right]
    \right|\right]\nonumber\\
    \leq&\E_\Prob\left[\sup_{f\in\FuncClass}\sup_{g\in\GuncClass(1)} \left|\frac{1}{n}\sum_{i=1}^n\left( g(\ell(X_i,Y_i,f(X_i)))  -  
     g(\ell(X'_i,Y'_i,f(X'_i)))\right)
    \right|\right]\nonumber\\
    \leq&\E_{\Prob,\R}\left[\sup_{f\in\FuncClass}\sup_{g\in\GuncClass(1)} \left|\frac{1}{n}\sum_{i=1}^n\xi_i\left( g(\ell(X_i,Y_i,f(X_i)))  -  
     g(\ell(X'_i,Y'_i,f(X'_i)))\right)
    \right|\right]\nonumber\\
    \leq&2\E_{\Prob,\R}\left[\sup_{f\in\FuncClass}\sup_{g\in\GuncClass(1)} \left|\frac{1}{n}\sum_{i=1}^n\xi_i g(\ell(X_i,Y_i,f(X_i)))  
    \right|\right]=2\Rade(n,\FuncClass).
\end{align}
Putting \eqref{eq:McDiarmid} and \eqref{eq:supsup} together concludes the proof. 
\end{proof}

    \subsection{Permuation Complexity}

We note that this proof, along with many other proofs for Section~\ref{Sec:SV},
is based on the machinery and techniques in Massart's finite class Lemma~\citep{massart2000some} and DKW inequality in \citep{devroye2013probabilistic}. 

\thmPermutationComplexity*

\begin{proof}
    For a positive $\lambda$, we have,
    \begin{align}\label{eq:RadeExpGen}
        \exp\left(\lambda\Rade(n,\FuncClass)\right)&=\exp\left(\frac{\lambda}{n} \E_{\Prob,\R}\left[\sup_{f\in\FuncClass}\sup_{g\in\GuncClass(1)} \left|\sum_{i=1}^n\xi_i g(\ell(X_i,Y_i,f(X_i)))\right|\right]\right)\nonumber\\
        &\leq \E_{\Prob,\R}\left[\exp\left(\frac{\lambda}{n} \sup_{f\in\FuncClass}\sup_{g\in\GuncClass(1)} \left|\sum_{i=1}^n\xi_i g(\ell(X_i,Y_i,f(X_i)))\right|\right)\right]\nonumber\\
        &\leq \E_{\Prob,\R}\left[\E_{\Prob,\R}\left[\exp\left(\frac{\lambda}{n} \sup_{f\in\FuncClass}\sup_{g\in\GuncClass(1)} \left|\sum_{i=1}^n\xi_i g(\ell(X_i,Y_i,f(X_i)))\right|\right)\Big|\sigma\left(\lbrace X_i,Y_i\rbrace_{i=1}^n\right)\right]\right]
    \end{align}

    For any $f\in\FuncClass$, let $\pi_f$ denote a permutation such that  $\ell(X_{\pi_f(i)},Y_{\pi_f(i)},f(X_{\pi_f(i)}))\leq \ell(X_{\pi_f(j)},Y_{\pi_f(j)},f(X_{\pi_f(j)}))$ for any $i,j\in[n]$ and $i\leq j$. Therefore we have,
    
    \begin{align*}
     \sup_{g\in\GuncClass(1)} \left|\sum_{i=1}^n\xi_i g(\ell(X_i,Y_i,f(X_i)))\right|= \sup_{g\in\GuncClass(1)} \left|\sum_{i=1}^n\xi_{\pi_f(i)} g(\ell(X_{\pi_f(i)},Y_{\pi_f(i)},f(X_{\pi_f(i)})))\right|
    \end{align*}

    Consider a function $g(r') =\Ind_{\lbrace r'\leq r\rbrace}$. For such a function, $\sum_{i=1}^n\xi_{\pi_f(i)} g(\ell(X_{\pi_f(i)},Y_{\pi_f(i)},f(X_{\pi_f(i)})))$ is equal to,
    
    \begin{itemize}
        \item $0$ if $r<\min_i\lbrace \ell(X_{\pi_f(i)},Y_{\pi_f(i)},f(X_{\pi_f(i)}))\rbrace_i^n$,
        \item $\sum_i^j\xi_{\pi_f(i)}$ when $\ell(X_{\pi_f(j)},Y_{\pi_f(j)},f(X_{\pi_f(j)}))\leq r<\ell(X_{\pi_f(j+1)},Y_{\pi_f(j+1)},f(X_{\pi_f(j+1)}))$ for a $j\in\lbrace1,\ldots,n-1\rbrace$,
        \item $\sum_i^n\xi_{\pi_f(i)}$ otherwise. 
    \end{itemize}
    
    Using this property, we have,
    \begin{align}\label{eq:PermutationComp_Cramer}
        &\sup_{g\in\GuncClass(1)} \left|\sum_{i=1}^n\xi_{\pi_f(i)} g(\ell(X_{\pi_f(i)},Y_{\pi_f(i)},f(X_{\pi_f(i)})))\right|=\max_j\left|\sum_i^j\xi_{\pi_f(i)}\right|
    \end{align}

    Using this equality, we can further extend the Eq.~\ref{eq:RadeExpGen},

    \begin{align}\label{eq:RadeExpGen_Permut}
        \exp\left(\lambda\Rade(n,\FuncClass)\right)
        &\leq \E_{\Prob,\R}\left[\E_{\Prob,\R}\left[\exp\left(\frac{\lambda}{n} \sup_{f\in\FuncClass}\max_j\left|\sum_i^j\xi_{\pi_f(i)}\right|\right)\Big|\sigma\left(\lbrace X_i,Y_i\rbrace_{i=1}^n\right)\right]\right]\nonumber\\
         &\leq \E_{\Prob,\R}\left[\E_{\Prob,\R}\left[\N_\Pi(\FuncClass_\ell,\lbrace X_i,Y_i\rbrace_{i=1}^n)\exp\left(\frac{\lambda}{n} \max_j\left|\sum_i^j\xi_{i}\right|\right)\Big|\sigma\left(\lbrace X_i,Y_i\rbrace_{i=1}^n\right)\right]\right]\nonumber\\
         &\leq \N_\Pi(n,\FuncClass_\ell,\Prob)\E_{\Prob,\R}\left[\exp\left(\frac{\lambda}{n} \max_j\left|\sum_i^j\xi_{i}\right|\right)\right],
    \end{align}
    where the second inequality follows from the fact that effectively, 
    there are at most $\N_\Pi(\FuncClass_\ell,\lbrace X_i,Y_i\rbrace_{i=1}^n)$ number of $\pi_f$'s.
    Using the same derivation for~\eqref{eq:remove-absoluate-value-xi} (Lemma~\ref{lemma:Roberto-max-trick-part-a}), we have 
    \begin{align*}
        \exp\left(\lambda\Rade(n,\FuncClass)\right)
        &\leq 2\N_\Pi(n,\FuncClass_\ell,\Prob)\E_{\Prob,\R}\left[\max_{j}\left( \exp\left(\frac{\lambda}{n}\sum_i^j\xi_i\right)\Ind_{\lbrace\sum_i^j\xi_i\geq0\rbrace}\right)\right].
    \end{align*}
    By Lemma~\ref{lemma:Roberto-max-trick-part-b}, we have 
    \begin{align*}
        \exp\left(\lambda\Rade(n,\FuncClass)\right)
        &\leq 4\N_\Pi(n,\FuncClass_\ell,\Prob)\exp\left(\frac{\lambda^2}{2n}\right)
    \end{align*}
    
    Now, taking the log from both sides, and dividing by $\lambda$, we have 
    $
        \Rade(n,\FuncClass)
        \leq \frac{\log( 4\N_\Pi(n,\FuncClass_\ell,\Prob))}{\lambda}+\frac{\lambda}{2n}. 
    $
    Choosing $\lambda=\sqrt{2n\log(2\N_\Pi(n,\FuncClass_\ell,\Prob))}$, we obtain the final result    
    \begin{align*}
        \Rade(n,\FuncClass)
        &\leq \sqrt{\frac{\log( 4\N_\Pi(n,\FuncClass_\ell,\Prob))}{2n}}.
    \end{align*}
\end{proof}

\thmFiniteFuncClass*

\begin{proof}\label{proof:finite-Rade} 
The result follows since when $|\FuncClass| < \infty$, 
$\N_\Pi(n,\FuncClass_\ell,\Prob) \leq |\FuncClass|$, 
i.e., we need at most one permutation function 
to sort the losses for each $f \in \FuncClass$. 

\end{proof}

\clearpage
\section{Proof of Results in Section~\ref{sec:application-risk-assessement}}
\label{appendix:risk-assessment}

\thmRiskAssessment*

\begin{proof}
For all $\rho \in \mathbb{T}$, 
since  
$\rho$ is $L(\rho, 1, \L_\infty)$ H\"older,
we have that 
$|\rho(F(\cdot; f)) - \rho(\wh F(\cdot; f))| \leq L(\rho, 1, \L_\infty) \|F - \wh F\|_\infty$. 
The desired result then follows. 
\end{proof}

{
{The error of risk assessment can be bounded using distances other than the sup-norm, such as the Wasserstein distance.}
For two random variables $U$ and $U'$ 
with bounded support $[0, D]$, 
the dual form of the Wasserstein distance $W_1(F_U, F_{U'})$ 
is given by~\citep{vallender1974calculation},
\begin{align*}%
    W_1(F_U, F_{U'}) = \int_0^D |F_{U}(t)-F_{U'}(t)|dt\leq D\|F_U-F_{U}\|_{\infty}.
\end{align*}
This inequality suggests the following corollary. 
\begin{restatable}{corollary}{corRiskAssessment}\label{corollary:risk-assessment-w1}
Under the setting of Theorem~\ref{Thm:riskbound} where the loss has support $[0,D]$, 
with probability $1-\delta$,  for all $\rho \in \mathbb{T}$, 
we have 
\begin{align*}
     \sup_{f \in \FuncClass}|\rho(F(\cdot;f))-\rho(\wh F(\cdot;f))|\leq   L(\rho,p,W_1)  D^p \epsilon^p,
\end{align*}
where $\mathbb{T}$ is the set of $L(\cdot, p, W_1)$ H\"older risk functionals 
on the space of bounded random variables. 
\end{restatable}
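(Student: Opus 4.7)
The plan is to chain together three ingredients in a short reduction: (i) the $L(\rho,p,W_1)$-H\"older property in the Wasserstein metric that defines membership in $\mathbb{T}$, (ii) the bounded-support inequality $W_1(F_U,F_{U'})\le D\,\|F_U-F_{U'}\|_\infty$ stated just above the corollary, which applies because the loss is supported on $[0,D]$, and (iii) the hypothesis $\Prob(\uniformquantity\le\epsilon)\ge 1-\delta$ that drove Theorem~\ref{Thm:riskbound}.

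First I would condition on the high-probability event $\mathcal{E}:=\{\uniformquantity\le\epsilon\}$, which by assumption has $\Prob(\mathcal{E})\ge 1-\delta$. On $\mathcal{E}$ we simultaneously have $\|F(\cdot;f)-\wh F(\cdot;f)\|_\infty\le\epsilon$ for every $f\in\FuncClass$. Next, since the loss lies in $[0,D]$, both $F(\cdot;f)$ and $\wh F(\cdot;f)$ are supported on $[0,D]$, and the dual-form bound above the corollary gives
\begin{align*}
W_1\bigl(F(\cdot;f),\wh F(\cdot;f)\bigr)\;\le\; D\,\|F(\cdot;f)-\wh F(\cdot;f)\|_\infty\;\le\; D\epsilon
\end{align*}
on $\mathcal{E}$, uniformly over $f\in\FuncClass$.

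Finally, for any $\rho\in\mathbb{T}$, the $L(\rho,p,W_1)$-H\"older property yields
\begin{align*}
|\rho(F(\cdot;f))-\rho(\wh F(\cdot;f))|\;\le\; L(\rho,p,W_1)\,W_1\bigl(F(\cdot;f),\wh F(\cdot;f)\bigr)^p\;\le\; L(\rho,p,W_1)\,D^p\epsilon^p,
\end{align*}
and taking the supremum over $f\in\FuncClass$ closes the bound. The one point worth emphasizing, rather than a genuine obstacle, is that the intermediate $W_1$ control does not depend on $\rho$: once the event $\mathcal{E}$ is realized, the same sample bound propagates through every $\rho\in\mathbb{T}$ simultaneously, so the uniformity over $\mathbb{T}$ (and over $\FuncClass$) is inherited directly from the uniform CDF convergence, exactly as in the proof of Theorem~\ref{Thm:riskbound} with $\|\cdot\|_\infty$ replaced by $W_1$ via the bounded-support inequality.
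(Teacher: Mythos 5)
Your argument is correct and is essentially identical to the paper's proof: both invoke the bounded-support inequality $W_1(F,\wh F)\le D\,\|F-\wh F\|_\infty$ and then the $L(\rho,p,W_1)$-H\"older property on the event $\{\uniformquantity\le\epsilon\}$, with uniformity over $\rho$ and $f$ inherited from the single CDF bound. Your write-up just makes the conditioning on the high-probability event more explicit than the paper does.
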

}

\begin{proof}
    For all $\rho \in \mathbb{T}$, 
    since  
    $\rho$ is $L(\rho, p, W_1)$ H\"older,
    we have that 
    $|\rho(F(\cdot; f)) - \rho(\wh F(\cdot; f))| \leq L(\rho, p, W_1) W_1(F, \wh F)^p
    \leq L(\rho, p, W_1) D^p \|F - \wh F\|_\infty^p$. 
    The desired result then follows. 
\end{proof}

\corDistortionRiskAssessment*

\begin{proof}
    As is shown in~\citet[Lemma 4.1]{huang2021off}, 
    $\rho$ is $L$-Lipschitz. 
    Thus, directly applying Theorem~\ref{Thm:riskbound} 
    concludes the proof.
\end{proof}

\clearpage

\section{Proofs for results in Section~\ref{sec:optimization}}
\label{appendix:optimization}

\subsection{Proof of Lemma~\ref{thm:almost-differentiable}}
Before proving Lemma~\ref{thm:almost-differentiable}, 
we first present two auxiliary lemmas.

\begin{lemma}\label{lemma:min_max_cnt}
    For any continuous function 
        $g,h: \mathbb{R}^d \to \mathbb{R}$, 
        $\min(g, h)$ and $\max(g, h)$ are continuous.
    Similarly, for any Lipschitz continuous function  
    $g,h: \mathbb{R}^d \to \mathbb{R}$, 
    $\min(g, h)$ and $\max(g, h)$ are  Lipschitz continuous.   
    \end{lemma}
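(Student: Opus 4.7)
The plan is to prove continuity and Lipschitzness of $\min(g,h)$ and $\max(g,h)$ via the standard pointwise identities
\begin{align*}
    \max(g,h) &= \tfrac{1}{2}\bigl(g + h + |g - h|\bigr), \\
    \min(g,h) &= \tfrac{1}{2}\bigl(g + h - |g - h|\bigr).
\end{align*}
Since the absolute value map $t \mapsto |t|$ is $1$-Lipschitz (and hence continuous) on $\mathbb{R}$, and sums and scalar multiples preserve continuity and Lipschitzness, both claims reduce to showing that $g+h$ and $g-h$ retain the regularity of $g$ and $h$. This is immediate from the triangle inequality applied pointwise.

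Concretely, for continuity: if $g,h$ are continuous at $x$, then so are $g\pm h$, hence $|g-h|$, and hence $\tfrac{1}{2}(g+h\pm|g-h|)$. For the Lipschitz case, suppose $g$ is $L_g$-Lipschitz and $h$ is $L_h$-Lipschitz. Then for all $x,y\in\mathbb{R}^d$,
\begin{align*}
    \bigl||g(x)-h(x)| - |g(y)-h(y)|\bigr|
    &\leq |(g(x)-h(x)) - (g(y)-h(y))| \\
    &\leq |g(x)-g(y)| + |h(x)-h(y)| \\
    &\leq (L_g + L_h)\|x-y\|,
\end{align*}
and similarly $|g(x)+h(x) - (g(y)+h(y))| \leq (L_g+L_h)\|x-y\|$. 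Combining these in the identity above yields the bound $|\max(g,h)(x) - \max(g,h)(y)| \leq (L_g+L_h)\|x-y\|$, and identically for $\min$. A slightly sharper constant $\max(L_g, L_h)$ is also available by an elementary case split on which of $g,h$ attains the max at $x$ versus $y$, but the looser constant suffices for downstream use.

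There is no real obstacle here: the entire argument is a one-line manipulation plus the triangle inequality. The only subtlety worth flagging in the write-up is making clear that the identity for $\max$ and $\min$ holds pointwise (not just in distribution or in some weaker sense), so the regularity transfers directly from $g,h$ to $|g-h|$ and thence to $\max(g,h)$ and $\min(g,h)$.
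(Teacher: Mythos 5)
Your proof is correct, but it takes a different route from the paper. You reduce everything to the pointwise identities $\max(g,h)=\tfrac12(g+h+|g-h|)$ and $\min(g,h)=\tfrac12(g+h-|g-h|)$ and then invoke the $1$-Lipschitzness of $t\mapsto|t|$ together with the triangle inequality; the paper instead argues directly by cases. For continuity it splits $\mathbb{R}^d$ into the set where $g=h$ and its complement and runs an $\epsilon$--$\delta$ argument at the coincidence points; for Lipschitzness it splits on which of $g,h$ attains the maximum at each of the two points, obtaining $|\max(g,h)(\theta)-\max(g,h)(\theta')|\leq\max_{f\in\{g,h\}}|f(\theta)-f(\theta')|$, and then handles $\min$ via $\min(g,h)=-\max(-g,-h)$. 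Your identity-based argument is shorter and more uniform (it treats $\min$ and $\max$ at once and transfers any modulus of continuity verbatim), at the price of the looser constant $L_g+L_h$; the paper's case analysis yields the sharper constant $\max(L_g,L_h)$ directly, which is the natural one when the lemma is iterated in Lemma~\ref{lemma:order-statistitics-continuous} over nested maxima and minima of up to $n$ losses, since it keeps the resulting Lipschitz constant from growing with the number of functions combined. You correctly note that the sharper constant is recoverable by a case split, and since the downstream results only need some finite Lipschitz constant (to invoke Rademacher's theorem for a.e.\ differentiability), either constant suffices for the paper's purposes.
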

    
    \begin{proof}
    Denote $f_{\min} = \min(g,h)$
    and $f_{\max} = \max(g,h)$. 
    \\
    \textbf{Continuity:}
    We first consider the case when both $g$ and $h$ are continuous. 
    Define $\overline{\Theta} = \{\theta \in \mathbb{R}^d: 
    g(\theta) = h({\theta})\}$. 
    For $\theta \notin \overline{\Theta}$, 
    $f_{\min}$ and $f_{\max}$ are continuous 
    since $g, h$ are continuous. 
    Consider $\theta \in \overline{\Theta}$. 
    For every $\epsilon > 0$, there exists
    $\delta_g, \delta_h > 0$ such that  for all $\theta' \in \mathbb{R}^d$, 
    $\|\theta - \theta'\|\leq \epsilon \implies 
    |g(\theta) - g(\theta')| \leq \delta_g, 
    |h(\theta) - h(\theta')| \leq \delta_h$.
    In addition, $f_{\min}(\theta) = g(\theta) = h(\theta)$, 
    $f_{\max}(\theta) = g(\theta) = h(\theta)$
    and $f_{\min}(\theta')$, $f_{\max}(\theta')$ 
    can be either $g(\theta')$ or $h(\theta')$.
    Combining both facts gives us that
    $|f_{\min}(\theta) - f_{\min}(\theta')| \leq \max(\delta_g, \delta_h)$
    and $|f_{\max}(\theta) - f_{\max}(\theta')| \leq \max(\delta_g, \delta_h)$.
    
    \textbf{Lipschitz Continuity:} 
    We next work with the case where both $g$ and $h$ are Lipschitz continuous. 
    When  
    $
        |f_{\max}(\theta) - f_{\max}(\theta')|
        = |g(\theta) - h(\theta')|,
    $
    we have the following two cases:
    \begin{enumerate}
        \item $g(\theta) > h(\theta')$: 
        $|g(\theta) - h(\theta')| = g(\theta) - h(\theta')
        \leq g(\theta) - g(\theta'), %
        $
        since $f_{\max}(\theta') = h(\theta')$. 
        \item $g(\theta) \leq h(\theta')$: $|g(\theta) - h(\theta')| = h(\theta') - g(\theta)
        \leq h(\theta') - h(\theta),
        $ since $ f_{\max}(\theta) = g(\theta)$. 
    \end{enumerate}
    Since both $h$ and $g$ are Lipschitz continuous, 
    we obtain that 
    $$|f_{\max}(\theta) - f_{\max} (\theta') | 
    {\leq} \max_{f \in \{g, h\}} |f(\theta) - f(\theta')| 
    \lesssim \|\theta - \theta'\|,$$
    showing that $f_{\max}$ is Lipschitz continuous. 
    The proof completes with the fact that $\min(f,g) = -\max(-f, -g)$. 
\end{proof}

\begin{lemma}\label{lemma:order-statistitics-continuous}
    If $\{\ell_\theta(x_j, y_j)\}_{j=1}^n$ are Lipschitz continuous %
    in $\theta$, 
    then for all $i \in [n]$, 
    $\ell_\theta(\pi_\theta(i))$, i.e., the $i$-th smallest loss evaluated using data points $\{x_j, y_j\}_{j=1}^n$, 
    is 
    Lipschitz continuous %
    in $\theta$.
    \end{lemma}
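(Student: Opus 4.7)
The plan is to represent the $i$-th order statistic of the losses as an explicit composition of finitely many $\min$ and $\max$ operations applied to the functions $\{\ell_\theta(x_j,y_j)\}_{j=1}^n$, and then invoke Lemma~\ref{lemma:min_max_cnt} inductively. The key identity I would use is the standard min-max representation of order statistics:
\begin{equation*}
\ell_\theta(\pi_\theta(i)) \;=\; \min_{\substack{S \subseteq [n]\\ |S|=i}} \; \max_{j \in S} \, \ell_\theta(x_j,y_j).
\end{equation*}
To justify this, I would argue that for any fixed $\theta$, the $i$-th smallest value among the $n$ losses coincides with the maximum of the subset consisting of the $i$ smallest losses, and that the maximum of any other size-$i$ subset is at least the $i$-th smallest. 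Hence the minimum over size-$i$ subsets of the in-subset maximum exactly equals $\ell_\theta(\pi_\theta(i))$.

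Given the representation, the proof is finished by a finite induction using Lemma~\ref{lemma:min_max_cnt}, which tells us that the pointwise $\min$ and $\max$ of two Lipschitz functions on $\mathbb{R}^d$ is Lipschitz. First, for each fixed subset $S$ of size $i$, the function $\theta \mapsto \max_{j \in S} \ell_\theta(x_j,y_j)$ is a $\max$ of $i$ Lipschitz functions; iterating Lemma~\ref{lemma:min_max_cnt} $i-1$ times shows it is Lipschitz, with Lipschitz constant bounded by the maximum of the individual Lipschitz constants $\max_{j \in S} L_j$. Second, the outer minimum is taken over $\binom{n}{i}$ such Lipschitz functions, so $\binom{n}{i}-1$ more applications of Lemma~\ref{lemma:min_max_cnt} yield that $\theta \mapsto \ell_\theta(\pi_\theta(i))$ is Lipschitz, with Lipschitz constant at most $\max_{j \in [n]} L_j$, where $L_j$ denotes the Lipschitz constant of $\ell_\theta(x_j,y_j)$.

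I do not anticipate any real obstacle here: the only non-mechanical step is verifying the min-max identity for order statistics, which is a short combinatorial argument. The inductive application of Lemma~\ref{lemma:min_max_cnt} is routine, and since both $n$ and $i$ are fixed finite integers the induction terminates after finitely many steps. This simultaneously proves Lipschitz continuity (and in particular continuity) of each order statistic $\ell_\theta(\pi_\theta(i))$ in $\theta$, which is exactly what Lemma~\ref{lemma:order-statistitics-continuous} asserts and what is then needed to conclude almost-everywhere differentiability of $\rho(\wh F_\theta)$ in Lemma~\ref{thm:almost-differentiable} via Rademacher's theorem.
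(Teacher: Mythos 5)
Your proposal is correct and follows essentially the same route as the paper: the same min-max representation $\ell_\theta(\pi_\theta(i)) = \min_{|S|=i}\max_{j\in S}\ell_\theta(x_j,y_j)$ combined with repeated application of Lemma~\ref{lemma:min_max_cnt}. The only addition is your explicit tracking of the Lipschitz constant, which the paper leaves implicit.
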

    
    \begin{proof}
    The key observation is that the $i$-th smallest loss can be defined as 
    $\ell_\theta(\pi_\theta(i))
    = \min\{ \max\{\ell_\theta(j): j \in J\}: J \subseteq [n], |J| = i\}.
    $
    Since each $\ell_\theta(j)$ is 
    Lipschitz continuous %
    in $\theta$
    and that for $j' \in J$, 
    $\max\{\ell_\theta(j): j \in J\} = \max\{\ell_\theta(j'), \max\{\ell_\theta(j): j \in J \setminus \{j'\} \} \}$, 
    by Lemma~\ref{lemma:min_max_cnt}, 
    we have $\max\{\ell_\theta(j): j \in J\}$
    to be 
    Lipschitz continuous %
    in $\theta$.
    Similarly, since $\max\{\ell_\theta(j): j \in J\}$ is 
    Lipschitz continuous %
    in $\theta$, 
    we have $\min\{ \max\{\ell_\theta(j): j \in J\}: J \subseteq [n], |J| = i\}$ to be 
    Lipschitz continuous. %
    \end{proof}

\thmDifferentiableAE*

\begin{proof}
    Using Lemma~\ref{lemma:order-statistitics-continuous}, 
    we obtain that $\ell_\theta(\pi_\theta(i))$ is Lipschitz in $\theta \in \Theta$. 
    Following from a classical result of Rademacher~\citep[Theorem 9.60]{rockafellar2009variational}, i.e., a locally Lipschitz function is differentiable almost everywhere, 
    we have that $\rho(\wh F_\theta)$ is differentiable almost everywhere.
\end{proof}

\ifarxiv
\subsection{Expected Gradient}
\label{appendix:expected-gradient}

\begin{lemma}\label{lemma:cdf-order-statistics}
Let $X_1, \ldots, X_m$ denote $m$ independently and identically distributed samples with \CDF $F$.
The $k$-th order statistics $X_{(k)}$, where $X_{(1)} \leq \cdots \leq X_{(m)}$,
 has \CDF 
 \begin{align*}
    F_{X_{(k)}}(x) = \Prob(X_{(k)} \leq x)
    = \sum_{j=k}^m {m \choose j} (1 - F(x))^{m-j} F(x)^{j}. 
 \end{align*}
\end{lemma}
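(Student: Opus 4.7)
The plan is to reduce the problem to a statement about a binomial random variable, exploiting the fact that the order of the $X_i$'s is determined by how many fall below a given threshold. I would not do any heavy calculation; the result is essentially a rewriting of a binomial tail.

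First, I would observe the key equivalence: for any $x \in \Real$, the event $\{X_{(k)} \leq x\}$ holds if and only if at least $k$ of the samples $X_1, \ldots, X_m$ are less than or equal to $x$. This is because, once the samples are sorted, the $k$-th smallest entry sits at or below $x$ exactly when there are at least $k$ entries at or below $x$. Making this equivalence precise requires only the definition of the order statistics and needs no continuity assumptions on $F$ (ties are handled correctly either way, since we only count how many fall $\leq x$).

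Second, I would introduce the counting variable
\begin{align*}
    N(x) := \sum_{i=1}^m \Ind_{\{X_i \leq x\}}.
\end{align*}
Because the $X_i$ are iid with CDF $F$, each indicator $\Ind_{\{X_i \leq x\}}$ is a Bernoulli random variable with parameter $F(x)$, and they are independent across $i$. Hence $N(x) \sim \mathrm{Binomial}(m, F(x))$.

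Combining the two observations, I would write
\begin{align*}
    F_{X_{(k)}}(x) = \Prob(X_{(k)} \leq x) = \Prob(N(x) \geq k) = \sum_{j=k}^m \binom{m}{j} F(x)^j (1 - F(x))^{m-j},
\end{align*}
which is the claimed identity. There is no real obstacle here; the only subtlety worth double-checking is that the equivalence in the first step holds whether or not $F$ has atoms, which is why I phrased everything in terms of $\leq$ rather than $<$. Given how short the argument is, I would present it as a one-paragraph proof immediately following the lemma statement.
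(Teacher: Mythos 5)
Your proof is correct: the equivalence $\{X_{(k)} \leq x\} = \{\sum_{i=1}^m \Ind_{\{X_i \leq x\}} \geq k\}$ together with the observation that the count is $\mathrm{Binomial}(m, F(x))$ gives exactly the claimed binomial tail, and your care with $\leq$ versus $<$ correctly handles atoms of $F$. The paper itself leaves this lemma unproved (its proof environment is only the placeholder ``rather standard, which we fill in later''), and your argument is precisely the standard one that placeholder refers to, so nothing further is needed.
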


\begin{proof}
    The proof is rather standard, which we fill in later.
\end{proof}

We will show that 
\begin{align*}
\sum_{k=1}^m w_k \nabla_\theta \E[\ell_\theta(\wt \pi_\theta(k))]
=\sum_{i=0}^{n-1}  \left( 
    \bm{w}^\top \bm{H}_{i,:}
    \right)
\cdot 
\left(\nabla_\theta \ell_\theta(\pi_\theta(i+1)) - \nabla_\theta \ell_\theta(\pi_\theta(i))\right).
\end{align*}
First, we notice that 
since $\ell_\theta(\wt x_i, \wt y_i) = \ell_\theta(x_j, y_j)$ for some $j \in [n]$ with probability $\frac{1}{n}$, 
$\ell_\theta(\wt x_i, \wt y_i)$ has \CDF 
$\wh F_\theta(r) = \frac{1}{n} \sum_{j=1}^n \Ind_{\{\ell_\theta(x_j, y_j) \leq r\}}$. 
Next, we have the \CDF for $\ell_\theta(\wt \pi_\theta(k))$, 
the $k$-th smallest loss among $\{\ell_{\theta}(\wt x_i, \wt y_i)\}_{i=1}^m$, 
to be 
\begin{align*}
F_{\ell_\theta(\wt \pi_\theta(k))}(x) = \Prob(\ell_\theta(\wt \pi_\theta(k)) \leq x)
=  \sum_{j=k}^m {m \choose k} (1 - \wh F_\theta(x))^{m-j} \wh F_\theta(x)^j.
\end{align*}
Since the losses are positive, we have that 
\begin{align*}
    \E[\ell_\theta(\wt \pi_\theta(k))]
    &= \int_0^\infty 1 - F_{\ell_\theta(\wt \pi_\theta(k))}(x) dx \\
    &= \sum_{i=1}^{n} \left(1 - F_{\ell_\theta(\wt \pi_\theta(k))}(\ell_\theta(\pi_\theta(i-1))) \right) \cdot
    \left(\ell_\theta(\pi_\theta(i)) - \ell_\theta(\pi_\theta(i-1))\right)\\
    &=\sum_{i=1}^{n} \left(1 - \sum_{j=k}^m {m \choose k} \left(1 - \frac{i-1}{n}\right)^{m-j}  \left(\frac{i-1}{n}\right)^j \right) \cdot
    \left(\ell_\theta(\pi_\theta(i)) - \ell_\theta(\pi_\theta(i-1))\right)\\
    &= \sum_{i=1}^{n} \bm{H}_{i,k} 
    \left(\ell_\theta(\pi_\theta(i)) - \ell_\theta(\pi_\theta(i-1))\right).
\end{align*}
Finally, we have that 
\begin{align*}
\sum_{k=1}^m w_k  \E[\nabla_\theta\ell_\theta(\wt \pi_\theta(k))]
= 
\sum_{k=1}^m w_k \nabla_\theta \E[\ell_\theta(\wt \pi_\theta(k))]
&= \sum_{k=1}^m w_k \nabla_\theta \left( \sum_{i=0}^{n-1} \bm{H}_{i,k} 
\left(\ell_\theta(\pi_\theta(i+1)) - \ell_\theta(\pi_\theta(i))\right)\right)\\
&=\sum_{i=0}^{n-1}  \left( 
    \bm{w}^\top \bm{H}_{i,:}
    \right)
\cdot 
\left(\nabla_\theta \ell_\theta(\pi_\theta(i+1)) - \nabla_\theta \ell_\theta(\pi_\theta(i))\right).
\end{align*}

\subsubsection{ERM mini-batch weight recovery}
\label{appendix:erm-mini-batch-weight}

It suffices to show that when $\bm{w} = (\frac{1}{m} \cdots \frac{1}{m})^\top$, $\bm{g} = \wb{\bm{H}} \wb{\bm{w}}$.
For $i \in [m]$, we have that 
\begin{align*}
    \bm{w}^\top \bm{H}_{i,:}
    &= \sum_{k=1}^m {w}_k \bm{H}_{i,k}
    = 1 - \frac{1}{m} \sum_{k=1}^m \sum_{j=k}^m {m \choose j} \left( 1 - \frac{i}{m}\right)^{m-j} \left(\frac{i}{m} \right)^j\\
    &= 1 - \frac{1}{m} \sum_{k=1}^m k {m \choose k} \left( 1 - \frac{i}{m}\right)^{m-k} \left(\frac{i}{m} \right)^k\\
    &= 1 - \frac{1}{m} \cdot m \cdot \frac{i}{m} = 1 - \frac{i}{m} = \wb{g}_i, 
\end{align*}
where the forth equality uses the expression of the expected value of a binomial distribution.

\begin{proof}[Proof of Theorem~\ref{thm:mini-batch}]
    We denote the Lipschitz constant of $g$ to be $L$. 
    Since $\bm{H}$ is invertible, we have that 
    $\|\wb{\bm{g}} - \wb{\bm{H}} \bm{w}\| = 0$. 
    We define function $h:[n] \to [m]$ a 
    that maps index $i \in [n]$
    to the reduced index $j \in [m]$
    such that $|i - {h(i)}| \leq \frac{n}{2m}$. 
    (Such mapping exists and we could fill in later.)
    For the bias term, we have 
    \begin{align*}
        &\|\E[\nabla_\theta \rho(\wh F_\theta) - G(\{\ell_\theta(\wt x_i, \wt y_i)\})] \|
        = \left\| \sum_{i=1}^{n}  \left( g_i - 
        \bm{w}^\top \bm{H}_{i,:} 
        - g_{i+1} + \bm{w}^\top \bm{H}_{i+1,:}
        \right) 
    \nabla_\theta \ell_\theta(\pi_\theta(i+1)) \right\| \\
    \leq& B \sum_{i=1}^{n} |g_i - g_{i+1} + 
    \bm{w}^\top \bm{H}_{i+1,:} - \bm{w}^\top \bm{H}_{i,:}|\\
    \leq& B \sum_{i=1}^{n} 
    |g_i - \wb g_{h(i)} + \wb g_{(i+1)m/n}  - g_{i+1}
    + \wb g_{h(i)} - \bm{w}^\top \wb{\bm{H}}_{h(i),:}\\
    &\qquad + \bm{w}^\top \wb{\bm{H}}_{h(i+1),:}-  \wb g_{(i+1)m/n}
    + \bm{w}^\top \wb{\bm{H}}_{h(i),:} - \bm{w}^\top \bm{H}_{i,:}
    + 
    \bm{w}^\top \bm{H}_{i+1,:} - \bm{w}^\top \wb{\bm{H}}_{h(i+1),:}|\\
    \lesssim& B \sum_{i=1}^n L\left|\frac{i}{n} - \frac{h(i)}{m}\right| 
    + \left|\bm{w}^\top \bm{H}_{i,:} - \bm{w}^\top \wb{\bm{H}}_{h(i),:}\right|
    + \left|  \bm{w}^\top \bm{H}_{i+1,:} - \bm{w}^\top \wb{\bm{H}}_{h(i+1),:}\right|    
    \\
    \lesssim& B \sum_{i=1}^n \frac{L}{m} 
    + \frac{1}{\sqrt{m}}%
    \lesssim \frac{n (1 - \frac{m}{n})}{\sqrt{m}}.
    \end{align*}
The last equality holds since 
\begin{align*}
    \| \bm{w}^\top \bm{H}_{i+1,:} - \bm{w}^\top \wb{\bm{H}}_{h(i+1),:}\|
    \leq \|\bm{w}\|_2 \| \bm{H}_{i,:} -  \bm{w}^\top \wb{\bm{H}}_{h(i),:}\|
    \lesssim \sqrt{\frac{1}{m}},
\end{align*}
where we note that 
\begin{align*}
    \left|\bm{H}_{i,k} -  \bm{w}^\top \wb{\bm{H}}_{h(i),k} \right|
    \leq \max_{p \in [0,1]} \left| \frac{d}{d p}  \sum_{j=k}^m {m \choose k} (1 - p)^{m-j} p^j\right|
    \cdot |\frac{i}{n} - \frac{h(i)}{m}|
    \leq m \frac{1}{m} = 1.
\end{align*}
\leqi{
The key part is
\begin{align*}
    |\frac{i}{n} - \frac{h(i)}{m}| \leq \frac{1 - \frac{m}{n} }{m}.
\end{align*}
See notebook star page.
}

For variance, we have that 
\leqi{not sure what would be a good way to bound this... without heroic assumptions.}
\begin{align*}
    &\E[\|\nabla_\theta \rho(\wh F_\theta) -  G(\{\ell_\theta(\wt x_i, \wt y_i)\}_{i =1}^m)\|^2] \\
    =& \nabla_\theta \rho(\wh F_\theta)^\top 
    \left(\nabla_\theta \rho(\wh F_\theta) -  \E[G(\{\ell_\theta(\wt x_i, \wt y_i)\}_{i =1}^m)] \right)
    + \E \left[G(\{\ell_\theta(\wt x_i, \wt y_i)\}_{i =1}^m)^\top \left(G(\{\ell_\theta(\wt x_i, \wt y_i)\}_{i =1}^m) -  \nabla_\theta \rho(\wh F_\theta) \right) \right]\\
    \lesssim& \frac{n^2}{m^2} + 
\end{align*}
\end{proof}
\fi

\subsection{Local Convergence}

The proofs for Corollary~\ref{lemma:biased_stochastic_gradient} 
are standard~\citep{bottou2018optimization}, 
which we provide for completeness. 

\corEDRM* 
    
\begin{proof}
  For notation simplicity, 
  we use $h(\theta)$ to denote $\rho(\wh F_\theta)$
  and $g_t$ to denote $\nabla_\theta \rho(\wh F_\theta)$ when $\theta = \theta_t$. 
  Since $h(\theta)$ is differentiable almost everywhere, 
  following~\eqref{eq:gd-step}, 
  the sequence $\{h(\theta_t)\}_{t=1}^T$
  will be differentiable almost surely. 
  Since $h$ is $\beta$-smooth, we have that
  \begin{align*}
  h(\theta_{t+1})-h(\theta_t)\leq \nabla_\theta h(\theta_t)^\top(\theta_{t+1}-\theta_t)  +\frac{\beta}{2} \|\theta_{t+1}-\theta_t\|^2.      
  \end{align*}
  We denote the filtration 
  for the stochastic process ${\{\theta_t\}}_{t=1}^T$
  to be ${\{\F_t\}}_{t=1}^T$.
  Extending the above inequality, we have 
  \begin{align*}
  h(\theta_{t+1})-h(\theta_t)\leq \nabla_\theta h(\theta_t)^\top(-\eta (g_t + w_t))  +\frac{\beta}{2} \|-\eta (g_t + w_t)\|^2,      
  \end{align*}
  which suggests that 
  \begin{align*}
      \mathbb{E}[h(\theta_{t+1})-h(\theta_t)] \leq -\eta \nabla_\theta h(\theta_t)^\top g_t +\eta^2 \frac{\beta}{2}  (\|g_t\|^2 + d \sigma_w^2),
  \end{align*}
  where $\sigma_w^2 = 1/d$ is the variance of $w_t$.
  Therefore, for the conditional expectation, we have 
  \begin{align*}
  \E\left[h(\theta_{t+1})-h(\theta_t)\Big|\F_t\right] &\leq \E\left[-\eta \nabla_\theta h(\theta_t)^\top g_t +\eta^2\frac{\beta}{2}  (\|g_t\|^2 + 1) \Big|\F_k\right]\\
  &= -(\eta-\eta^2\frac{\beta}{2}) \mathbb{E}\left[\|\nabla_\theta h(\theta_t)\|^2\Big|\F_t\right]
  + \eta^2\frac{\beta}{2}. 
  \end{align*}
  
Using the telescoping sum and law of total expectation, we obtain 
  \begin{align}\label{eq:telescoping}
    \E\left[h(\theta_{T})-h(\theta_1)\Big|\F_1\right]
    &=\E\left[\sum_{t=1}^T h(\theta_{1})-h(\theta_{t-1})\Big|\F_1\right] \leq (\eta-\eta^2\frac{\beta}{2})\E\left[\sum_{t=1}^T- \|\nabla_\theta h(\theta_t)\|^2\Big|\F_1\right]+ T\eta^2 \frac{\beta}{2}. 
  \end{align}
  
  Use the fact that $h(\theta_\star) \leq  h(\theta_{T})$, 
  we have $\E\left[h(\theta_1)-h(\theta_{T})\right]\leq h(\theta_1)-h(\theta_\star)$, which implies %
  \begin{align*}
    (\eta-\eta^2\frac{\beta}{2})\E\left[\sum_{t=1}^T \|\nabla_\theta h(\theta_t)\|^2\right]\leq h(\theta_1)-h(\theta_\star) + T \eta^2 \frac{\beta}{2}. 
  \end{align*}
  
  Plugging the learning rate $\eta={\frac{1}{\beta \sqrt{T}}}$, we have
 $
      \eta-\eta^2\frac{\beta}{2} = {\frac{1}{\beta \sqrt{T}}} - {\frac{1}{2\beta {T}}}\geq {\frac{1}{\beta \sqrt{T}}} - {\frac{1}{2\beta \sqrt{T}}}\geq {\frac{1}{2\beta \sqrt{T}}}>0.
  $
  Therefore we have 
  \begin{align*}
    \frac{1}{2\beta \sqrt{T}}\E\left[\sum_{t=1}^T \|\nabla_\theta h(\theta_t)\|^2\right]\leq h(\theta_1)-h(\theta_\star) 
    + \frac{1}{2\beta}.
  \end{align*}
  Rearranging the above inequality gives the result.
\end{proof}

\clearpage

\section{Additional Experimental Details}
\label{appendix:experiment}

\subsection{Risk Assessment on ImageNet Models}
Figure~\ref{fig:cvar-evaluation} shows the CVaR 
of models presented in Table~\ref{tbl:risk-assessments} 
under different $\alpha$'s. 
We note that $\CVaR_\alpha$ is the expected value 
above the top $100\alpha$ percent losses. %

\begin{figure}[h]
\centering
\includegraphics[width=.45\linewidth]{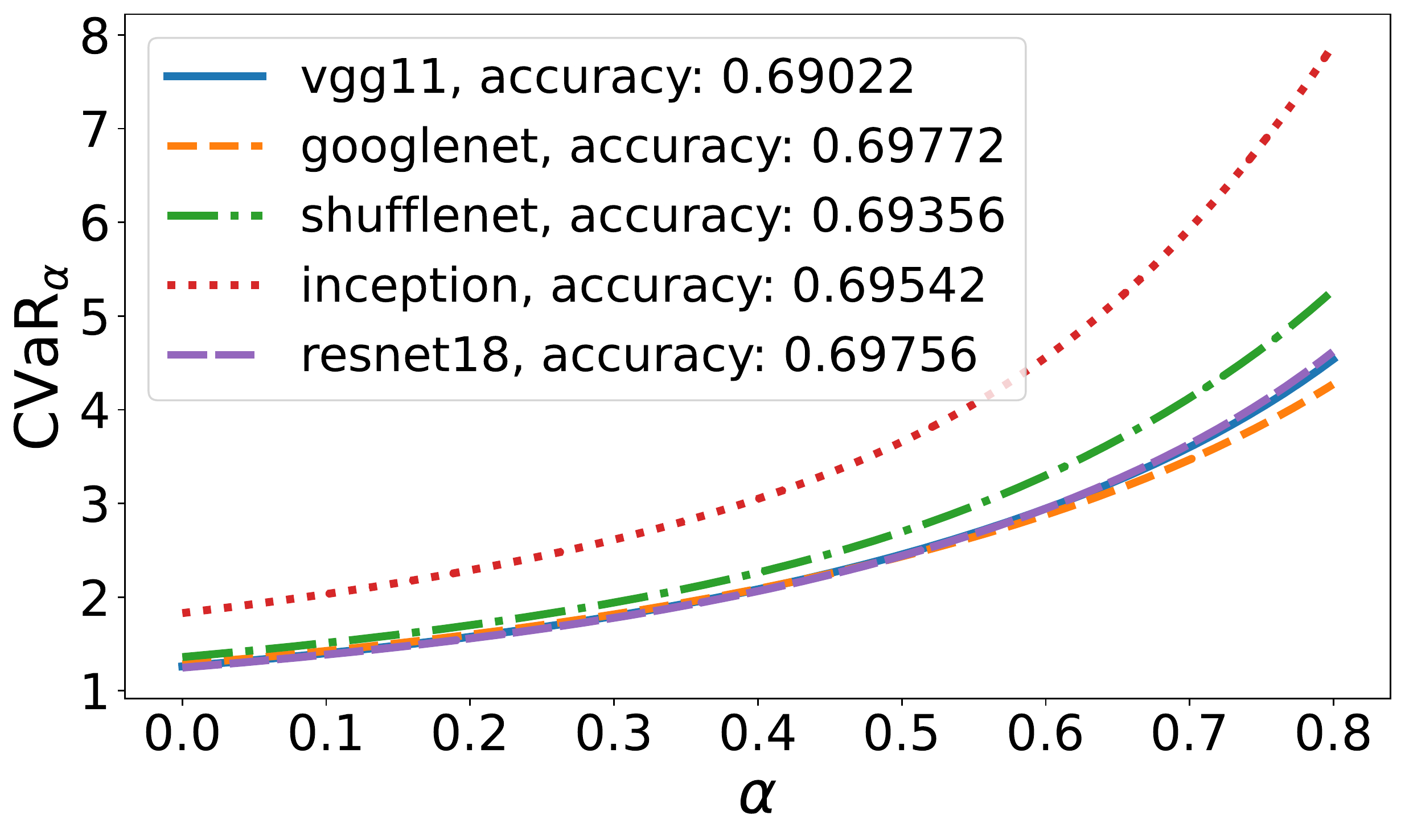}
\caption{$\CVaR_\alpha(\ell_f(Z))$ across different $\alpha$'s where 
$\ell_f$ is the cross-entropy loss evaluated on the ImageNet validation dataset.}
\label{fig:cvar-evaluation}
\end{figure}

\subsection{Empirical Distortion Risk Minimization}

\paragraph{Toy Example}
The data used in this experiment is generated using the \texttt{make\_blobs} function from \texttt{sklearn.datasets}
with the following parameters:
\texttt{n\_samples = [1000, 50]},
\texttt{centers = [[0.0, 0.0], [1.0, 1.0]]},
\texttt{cluster\_std = [1.5, 0.5]},
\texttt{random\_state = 0},
\texttt{shuffle = False}.

\end{document}